\patchcmd{\section}{\scshape}{\bfseries}{}{}
\renewcommand{\@secnumfont}{\bfseries}
\patchcmd{\section}{\normalfont}{\normalfont\color{MidnightBlue}}{}{}
\patchcmd{\subsection}{\normalfont}{\normalfont\color{MidnightBlue}}{}{}
\def\subsubsection{\@startsection{subsubsection}{3}%
\z@{.5\linespacing\@plus.7\linespacing}{-.5em}%
{\normalfont\bfseries}}
\newlength{\fixboxwidth}
\renewcommand{\algorithmiccomment}[1]{\bgroup\hfill//~#1\egroup}
\numberwithin{equation}{section}
\def\V{\mathcal{V}}
\def\P{\mathbb{P}}
\def\R{\mathbb{R}}
\def\cN{\mathcal{N}}
\def\S{\mathcal{S}}
\def\Z{\mathcal{Z}}
\def\E{\mathbb{E}}
\def\M{\mathcal{M}}
\def\X{{\bf\mathcal{X}}}
\def\F{\mathcal{F}}
\def\H{\mathcal{H}}
\def\N{\mathcal{N}}
\def\restrict#1{\raise-.5ex\hbox{\ensuremath|}_{#1}}
\def\<{\big\langle}
\def\>{\big\rangle}
\def\Var{\operatorname{Var}}
\def\Span{\operatorname{span}}
\definecolor{red}{rgb}{0.9, 0, 0}
\definecolor{green}{rgb}{0.0, 1.0, 0.0}
\newtheorem{Theorem}{Theorem}
\newtheorem{Proposition}{Proposition}
\newtheorem{Remark}{Remark}
\newtheorem{Example}{Example}
\newtheorem{Problem}{Problem}
\newcommand\blfootnote[1]{%
  \begingroup
  \renewcommand\thefootnote{}\footnote{#1}%
  \addtocounter{footnote}{-1}%
  \endgroup
}
\newcommand{\oset}[3][0ex]{%
  \mathrel{\mathop{#3}\limits^{
    \vbox to#1{\kern-2\ex@
    \hbox{$\scriptstyle#2$}\vss}}}}
\newcommand{\uset}[3][0ex]{%
  \mathrel{\mathop{#3}\limits_{
    \vbox to#1{\kern-2\ex@
    \hbox{$\scriptstyle#2$}\vss}}}}
\preto{\section}{\counterwithout{equation}{section}}
\begin{document}

\title[]{Co-discovering Graphical Structure and Functional Relationships Within Data: A Gaussian Process Framework for Connecting the Dots}

\author[Bourdais \and  Batlle \and  Yang \and  Baptista \and  Rouquette \and  Owhadi]{Th\'{e}o Bourdais$^\dagger$, Pau Batlle$^\dagger$, Xianjin Yang$^\dagger$, Ricardo Baptista$^\dagger$,\\ Nicolas Rouquette$^*$ \and Houman Owhadi$^\dagger$}
\blfootnote{$^\dagger$Computing and Mathematical Sciences, California Institute of Technology, Pasadena, CA 91125, USA} 
\blfootnote{$^*$Jet Propulsion Laboratory, California Institute of Technology, Pasadena, CA 91109, USA}
\blfootnote{E-mail addresses: \texttt{tbourdai@caltech.edu}, \texttt{pau@caltech.edu}, \texttt{yxjmath@caltech.edu}, \\ \texttt{rsb@caltech.edu}, \texttt{nicolas.f.rouquette@jpl.nasa.gov}, \texttt{owhadi@caltech.edu}}
\blfootnote{Classification: Physical Sciences, applied mathematics. Keywords:  Gaussian process,  analysis of variance, hypergraph discovery,  raw data analysis, functional relationships.  }

\maketitle

\begin{abstract}
Most problems within and beyond the scientific domain can be framed into one of the following three levels of complexity of function approximation. {\bf Type 1:} Approximate an unknown function given input/output data. {\bf Type 2:} Consider a collection of variables and functions, some of which are unknown, indexed by the nodes and hyperedges of a hypergraph (a generalized graph where edges can connect more than two vertices). Given partial observations of the variables of the hypergraph (satisfying the functional dependencies imposed by its structure), approximate all the unobserved variables and unknown functions. {\bf Type 3:} Expanding on Type 2, if the hypergraph structure itself is unknown, use partial observations of the variables of the hypergraph to discover its structure and approximate its unknown functions. 
These hypergraphs offer a natural platform for organizing, communicating, and processing computational knowledge. While most scientific problems can be framed as the data-driven discovery of unknown functions in a computational hypergraph whose structure is known (Type 2), many require the data-driven discovery of the structure (connectivity) of the hypergraph itself (Type 3).
We introduce an interpretable Gaussian Process (GP) framework for such (Type 3) problems that does not require randomization of the data, nor access to or control over its sampling, nor sparsity of the unknown functions in a known or learned basis. Its polynomial complexity, which
contrasts sharply with the super-exponential complexity of causal inference methods, is enabled by the  nonlinear analysis of variance capabilities of GPs used as a sensing mechanism.

\end{abstract}

\subsection*{Significance statement.} 
  "Civilization advances by extending the number of important operations we can perform without thinking about them" (Whitehead, 1911). In line with this perspective, many complex data analysis problems within and beyond the scientific domain involve discovering graphical structures and functional relationships within data. Nonlinear variance decomposition with Gaussian Processes simplifies and automates this process. Other methods, such as Artificial Neural Networks, lack this variance decomposition feature. Information-theoretic and causal inference methods suffer from super-exponential complexity with respect to the number of variables. The proposed technique performs this task in polynomial complexity. This unlocks the potential for applications involving the identification of a network of hidden relationships between variables without a parameterized model at an unprecedented scale, scope, and complexity.

\section*{Introduction}

\subsection*{The three levels of complexity of function approximation.}
As illustrated in Fig.~\ref{fignature1}.(a-c), Type 1, Type 2 and Type 3 problems can be formulated as completing or discovering hypergraphs where nodes represent variables and edges represent functional dependencies. The graph in Type 1 has only two variables and one unknown function. The graph in Type 2 has multiple variables and (some possibly unknown) functions, and the connectivity of the graph is known. The graph in Type 3 has an unknown connectivity (functional dependencies between variables may be unknown) and this is the focus of this work.
Current methods for solving Type 1 and 2 problems include  Deep Learning (DL) methods, which benefit from extensive hardware and software support but have limited guarantees.
Despite their prevalence,  Type 3 challenges have been largely overlooked due to their inherent complexity. 
Causal inference methods \cite{morgan2015counterfactuals, glymour2016causal} and probabilistic graphs \cite{stegle2010probabilistic, lopez2015towards}  and sparse regression methods \cite{doostan2011non, brunton2016discovering}, offer potential avenues for addressing Type 3 problems. However, it is important to note that their application to these problems necessitates additional assumptions.
Causal inference models, for instance, typically assume randomized data and some level of access to the data generation process or its underlying distributions. Sparse regression methods, on the other hand, rely on the assumption that functional dependencies have a sparse representation within a known basis.
In this paper, we do not impose these assumptions, and thus, these particular techniques may not be applicable. 
Furthermore while the complexity of Bayesian causal inference methods may grow super-exponentially with the number $d$ of variables, the complexity of our method is that of $d$ parallel computations of polynomial complexities  bounded  between $\mathcal{O}(d)$ (best case)  and $\mathcal{O}(d^4)$ (worst case).

\begin{figure}[h]
    \centering
        \includegraphics[width=1\textwidth ]{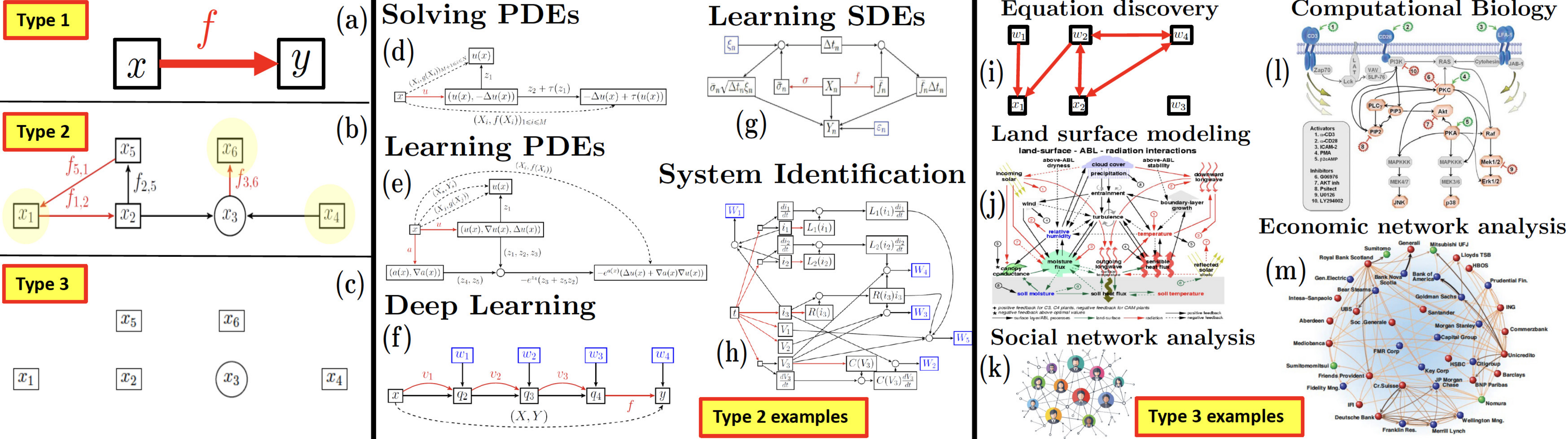}
    \caption{The three levels of complexity of function approximation.}
    \label{fignature1}
\end{figure}

\subsection*{Generalizing Gaussian Process methods.}
Although Gaussian Process (GP) methods are sometimes perceived as a well-founded but old technology limited to curve fitting (Type 1 problems), they have recently been generalized, beyond Type 1 problems, to an interpretable framework (Computational Graph Completion or CGC \cite{owhadi2022computational}) for solving Type 2 problems  \cite{owhadi2023ideas, chen2021solving,  batlle2023error, chen2023sparse, darcy2023one, hamzi2023learningpartial}, all while maintaining the simple and transparent theoretical and computational guarantees of kernel/optimal recovery methods  \cite{micchelli1977survey, OwhScobook2018}.
This paper introduces a comprehensive GP framework for solving Type 3 problems, which is interpretable and amenable to analysis. 
This framework leverages the Uncertainty Quantification (UQ) properties of GP methods, which do not have an immediate natural counterpart in DL methods.
It is based on a kernel generalization \cite{wahba2003introduction,owhadi2019kernelmd} of variance-based sensitivity analysis guiding the discovery of the structure of the hypergraph. Here, variables are linked via GPs, and those contributing to the highest data variance unveil the hypergraph's structure.  This GP variance decomposition of the data leads to signal-to-noise and a Z-score that can be employed to determine whether a given variable can be approximated as a nonlinear function of a 
subset of other variables.

\subsection*{The scope of Type 1, 2 and 3 problems.}
The scope of Type 1, 2 and 3 problems is immense. 
Numerical approximation \cite{OwhScobook2018, OwhScoSchNotAMS2019, SchaeferSullivanOwhadi17, schafer2021sparse}, Supervised Learning, and Operator Learning \cite{KFforcimate2020, hamzi2021learning, schafer2021sparsematrixvector, batlle2023kernel} can all be formulated as {\bf Type 1 problems}, i.e.,  as approximating unknown functions given  (possibly with noisy and infinite/high-dimensional) inputs/output data. 
The common GP based solution to these problems is to replace the underlying unknown function by a GP and compute its MAP estimator given available data.
{\bf Type 2 problems} include (Fig.~\ref{fignature1}.(d-h)) solving and learning (possibly stochastic) ordinary or partial differential equations  \cite{chen2021solving,darcy2023one}, Deep Learning  \cite{owhadi2023ideas}, dimension reduction, reduced-ordered modeling, system identification  \cite{owhadi2022computational}, closure modeling, etc. Indeed, all these problems can be formulated as completing a computational graph  \cite{owhadi2022computational}. In this formulation,  variables and functions are represented by the nodes and the edges of the graph whose structure corresponds to the functional dependencies between variables. Some of the functions and variables may be unknown, and by completing, we mean  approximating the unknown functions (colored in red in Fig.~\ref{fignature1}) given samples from the observed variables.
The  common GP-based solution to Type 2 problems is to simply replace unknown functions by  GPs and compute their MAP/MLE estimators given available data and constraints imposed by the structure of the graph  \cite{owhadi2022computational}.
While most problems in Computational Sciences and Engineering (CSE) and Scientific Machine Learning (SciML) can be framed as Type 1 and Type 2 challenges, many problems in science can only be categorized as {\bf Type 3 problems}, i.e., discovering the structure/connectivity of the graph itself from data prior to its completion. 
Indeed the scope of Type 3 problems extends well beyond Type 2 problems and includes equation discovery   (Fig.~\ref{fignature1}.(i));   the modeling of land surface interactions in weather prediction (Fig.~\ref{fignature1}.(j) from \cite{dirmeyer2019land}, discovering possibly hidden functional dependencies between state variables for a finite number of snapshots of those variables); social network analysis (Fig.~\ref{fignature1}.(k)  from \cite{gittell2021relational}, discovering functional dependencies between quantitative markers associated with each individual in situations where the connectivity of the network may be hidden); economic network analysis  (Fig.~\ref{fignature1}.(m)  from \cite{schweitzer2009economic},  discovering functional dependencies between the economic markers of different agents or companies, which is significant to systemic risk analysis); and computational biology (Fig.~\ref{fignature1}.(l) from \cite{sachs2005causal}, identifying pathways and interactions between genes from their expression levels).

\begin{figure}[h]
    \centering
        \includegraphics[width=1\textwidth ]{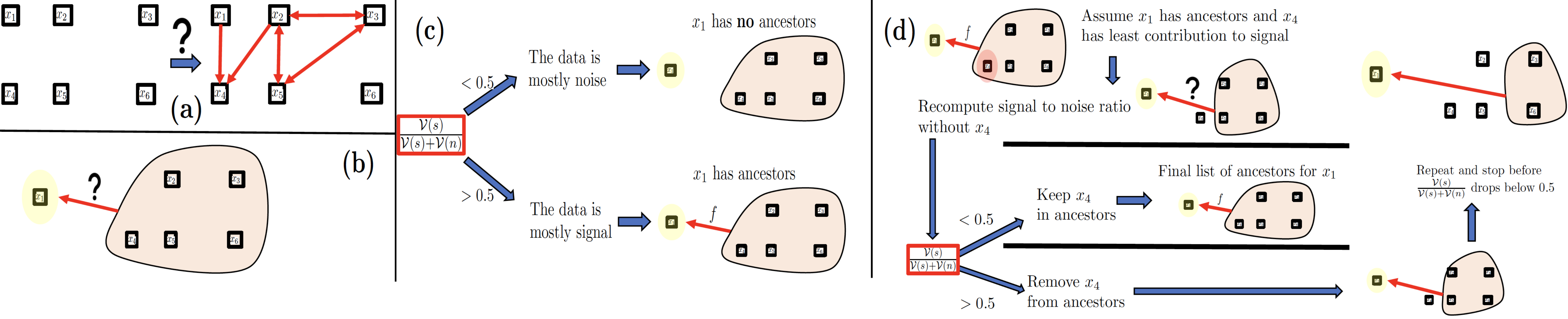}
    \caption{Ancestors identification in Type 3 problem.}
    \label{fignature2}
\end{figure}

\section*{Overview of the proposed approach for Type 3 problems.}

We first present an algorithmic overview of the proposed GP-based approach for Type 3 problems. For ease of presentation, we consider the 
simple setting of Fig.~\ref{fignature2}.(a) where we are given $N$ samples on the variables $x_1,\ldots,x_6$. After measurements/collection, these variables are normalized to have zero mean and unit variance. Our objective is to uncover the underlying dependencies between them.

\subsection*{A signal-to-noise ratio to decide whether or not a node has ancestors.}
Our algorithm's core concept is the identification of ancestors for each node in the graph. Let's explore this idea in the context of a specific node, say $x_1$, as depicted in Fig.~\ref{fignature2}(b). Determining whether $x_1$ has ancestors is akin to asking if $x_1$ can be expressed as a function of $x_2, x_3, \ldots, x_6$. In other words, can we find a function $f$ (living in a pre-specified space of functions that could be of controlled regularity) such that:
\begin{equation}\label{eqiheuedguydd}
x_1\approx f(x_2,\ldots,x_6)\, ?
\end{equation}  
To answer this question we  regress $f$ with a centered  GP $\xi\sim \cN(0,\Gamma)$ whose covariance function $\Gamma$ is an additive kernel of the form
$
\Gamma=K_s+\gamma \updelta(x-y)\,,
$
where $K_s$ is a smoothing kernel, $\gamma>0$ is a regularization parameter and $\updelta(x-y)$ is the white noise covariance  operator. This is equivalent to assuming the GP $\xi$ to be the sum of two independent GPs, i.e., $\xi=\xi_s+\xi_n$ where $\xi_s\sim \cN(0,K_s)$  is a smoothing/signal  GP and $\xi_n\sim \cN\big(0,\gamma \updelta(x-y)\big)$ is a noise GP.
Writing $\H_{K_s}$ for the Reproducing Kernel Hilbert Space (RKHS) induced by the kernel $K_s$, this is also equivalent to approximating $f$ with a minimizer of 
\begin{equation}\label{eqresidual}
\inf_{f\in \H_{K_s}}\|f\|_{K_s}^2+\frac{1}{\gamma}\big\|f(X)-Y\big\|_{\R^N}^2\,,
\end{equation}
where $\|\cdot\|_{\R^N}^2$ is the Euclidean norm on $\R^N$, $X$ is the input data on $f$ obtained as an $N\times 5$-matrix whose rows $X_i$ are the samples on $x_2,\ldots,x_6$, $Y$ is the output data on $f$ obtained as an $N$-vector whose entries are obtained from the samples on $x_1$, and $f(X)$ is a $N$-vector whose entries are the evaluations $f(X_i)$.
At the minimum
\begin{equation}
\V(s):=\|f\|_{K_s}^2
\end{equation}
quantifies the data variance explained by the signal GP $\xi_s$ and 
\begin{equation}
\V(n):=\frac{1}{\gamma}\big\|f(X)-Y\big\|_{\R^N}^2
\end{equation}
quantifies the data variance explained by the noise GP $\xi_n$ \cite{owhadi2019kernelmd}. 
This allows us to define the signal-to-noise ratio
\begin{equation}
\frac{\V(s)}{\V(s)+\V(n)}\in [0,1]\,.
\end{equation}
If  $\frac{\V(s)}{\V(s)+\V(n)}<0.5$\footnote{We will later present a version with a more sophisticated method for pruning, but we keep the 0.5 threshold in this example for simplicity.}, then, as illustrated in Fig.~\ref{fignature2}.(c), we deduce that $x_1$ has no ancestors, i.e., $x_1$ cannot be approximated as function of $x_2,\ldots,x_6$. Conversely if   $\frac{\V(s)}{\V(s)+\V(n)}>0.5$, then, we deduce that $x_1$ has  ancestors, i.e., $x_1$ can be approximated as function of $x_2,\ldots,x_6$.

\subsection*{Selecting the signal kernel \texorpdfstring{$K_s$}.}
 This process is repeated by selecting the kernel  $K_s$ to be linear ($K_s(x,x')=1+\beta_1 \sum_i x_i x_i' $), quadratic ($K_s(x,x')=1+\beta_1 \sum_i x_i x_i' + \beta_2 \sum_{i\leq j} x_i x_j x_i' x_j' $) or fully nonlinear to identify $f$ as linear, quadratic, or nonlinear. In the case of a nonlinear kernel, we employ:
\begin{equation}\label{eqrvhgvjhgv}
K_s(x,x')=1+\beta_1 \sum_i x_i x_i' + \beta_2 \sum_{i\leq j} x_i x_j x_i' x_j' + \beta_3 \prod_i (1+k(x_i,x_i'))
\end{equation}
where $k$ is a universal kernel, such as a Gaussian or a Mat\'{e}rn kernel, with all parameters set to $1$, and $\beta_i$ assigned the default value $0.1$.
We select $K_s$ as the first kernel that surpasses a signal-to-noise ratio of 0.5. If no kernel reaches this threshold, we conclude that $x_1$ lacks ancestors.

\subsection*{Pruning ancestors based on signal-to-noise ratio.}
Once we establish that $x_1$ has ancestors, the next step is to prune its set of ancestors iteratively. We remove nodes with the least contribution to the signal-to-noise ratio and stop before that ratio drops below $0.5$ as illustrated in Fig.~\ref{fignature2}.(d). To describe this, assume that $K_s$ is as in \eqref{eqrvhgvjhgv}. 
Then $K_s$ is an additive kernel that can be decomposed into two parts:
\begin{equation}
K_s=K_1+K_2\,,
\end{equation}
where $K_1=1+\beta_1 \sum_{i\not=1,2} x_i x_i' + \beta_2 \sum_{i\leq j, i,j\not=1,2} x_i x_j x_i' x_j' + \beta_3 \prod_{i\not=1,2} (1+k(x_i,x_i'))$ does not depend on $x_2$ and
$K_2=K_s-K_1$ depends on $x_2$.
This decomposition allows us to express $f$ as the sum of two components:
\begin{equation}\label{eqmodecstep}
f=f_1+f_2\,,
\end{equation}
where $f_1$ does not depend on $x_2$, $f_2$ depends on $x_2$ and
$
(f_1,f_2)= \operatorname{argmin}_{(g_1,g_2)\in \H_{K_1}\times \H_{K_2}\text{ s.t. } g_1+g_2=f}\|g_1\|_{K_1}^2+\|g_2\|_{K_2}^2\,.
$
Furthermore,
$
\|f\|_{K_s}^2=\|f_1\|_{K_1}^2+\|f_2\|_{K_2}^2\,,
$
and $\frac{\|f_2\|_{K_1}^2}{\|f\|_{K_s}^2}\in [0,1]$ quantifies the contribution of $x_2$ to the signal data variance.
Following the procedure illustrated in Fig.~\ref{fignature2}.(d), if, for example, $x_4$ is found to have the least contribution to the signal data variance, we recompute the signal-to-noise ratio without $x_4$ in the set of ancestors for $x_1$. If that ratio is below $0.5$, we do not remove $x_4$ from the list of ancestors, and $x_2,x_3,x_4,x_5, x_6$ is the final set of ancestors of $x_1$. 
If this ratio remains above $0.5$, we proceed with the removal. This iterative process continues, and we stop before the signal-to-noise ratio drops below $0.5$ to identify the final list of ancestors of $x_1$.
The most efficient version of our proposed algorithm does not use a threshold of $0.5$ on the signal-to-noise ratio to prune ancestors, but it rather employs an inflection point in the 
noise-to-signal ratio  $\frac{\V(n)}{\V(s)+\V(n)}(q)$ as a function of the number $q$ of ancestors (Fig.~\ref{fignature3}.(d)). To put it simply, after ordering the ancestors in decreasing contribution to the signal, the final number $q$ of ancestors is determined as the maximizer of $\frac{\V(n)}{\V(s)+\V(n)}(q+1)-\frac{\V(n)}{\V(s)+\V(n)}(q)$. 

\subsection*{Computational complexity} 
We will now present a detailed analysis of the computational demands of the proposed method as a function of the number of variables, denoted as $d$, and the number of samples, $N$, pertaining to these variables.
In the worst case, the proposed approach necessitates, for each of the $d$ variables: 
for $i=1,\ldots,d-1$, regressing a function mapping $d-i$ variables to the variable of interest and performing a mode decomposition, as exemplified in \eqref{eqmodecstep}, to identify the variable with the minimal contribution to the signal. 
Since these  two steps have the same cost, it follows that, in the worst case, the total computational complexity of the proposed method is $\mathbf{\mathcal{O}(d^2N^3)}$ which corresponds to product of the number of double-looping operations, $d^2$, and the cost of kernel regression from $N$ samples which, without acceleration, is $N^3$ (i.e., the cost of  inverting a $N\times N$ dense kernel matrix).
However, if kernel scalability techniques are utilized, such as when the kernel has a rank $k$ (for example, $k=d$ if the kernel is linear) or is approximated by a kernel of rank $k$ (e.g., via a random feature map), then this worst-case bound can be reduced to $\mathbf{\mathcal{O}(d^2 N k^2)}$ by reducing the complexity of each regression step from $\mathcal{O}(N^3)$ to $\mathcal{O}(N k^2)$.
Note that the statistical accuracy of the proposed approach requires that $N > d$ if the dependence of the unknown functions on their inputs is not sparse. 
Moreover, in the absence of kernel scalability techniques, the worst-case memory footprint of the method is $\mathcal{O}(N^2)$ due to the necessity of handling dense kernel matrices. However, once the functional ancestors of each variable are determined, these matrices can be discarded. Consequently, only one such matrix needs to be retained in memory at any given time.

\begin{figure}[h]
    \centering
        \includegraphics[width=\textwidth ]{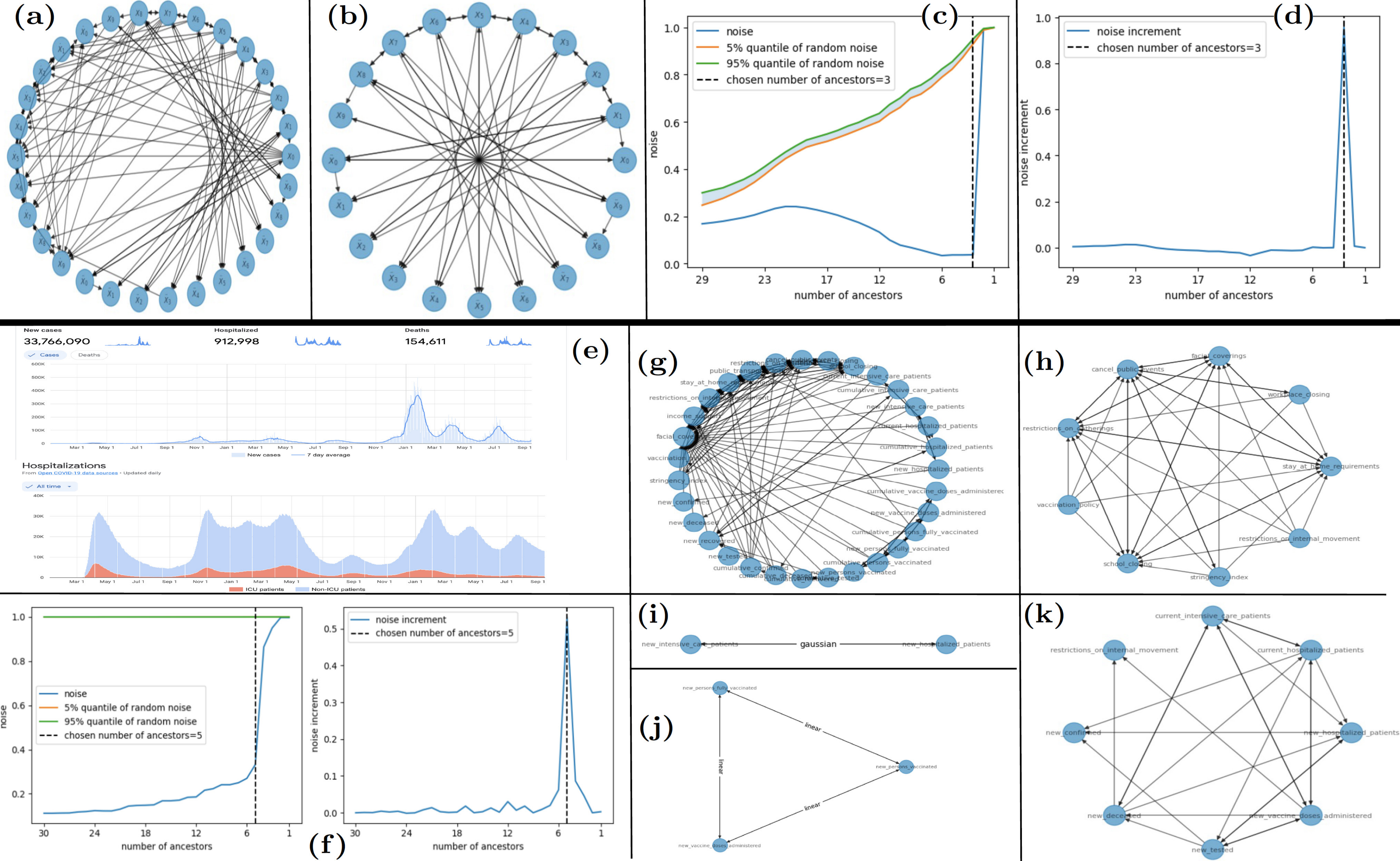}
    \caption{(a-d)  The Fermi-Pasta-Ulam-Tsingou system. (e-k) The Google Covid 19  open data.}
    \label{fignature3}
\end{figure}

\section*{Results}

The following  examples and experiments illustrate the proposed approach.

\subsection*{The Fermi-Pasta-Ulam-Tsingou system}
The Fermi-Pasta-Ulam-Tsingou (FPUT) system \cite{palais1997symmetries} is a prototypical chaotic dynamical system. 
 It is composed of $M$ masses indexed by $j\in\{0,\ldots,M-1\}$ with equilibrium position $jh$ with $h=1/M$. Each mass is tethered  to its two adjacent masses  by a nonlinear spring, and the displacement of the mass $x_j$ adheres to the equation: 
\begin{equation}
    \ddot{x}_j=\frac{c^2}{h^2}(x_{j+1}+x_{j-1}-2x_j) \left (1+\alpha(x_{j+1}-x_{j-1}) \right )\,,
\end{equation}
where $\alpha(x) = x^2$, $c=1$ and $M=10$. We use fixed boundary conditions by adding two more masses, with $x_{-1}=x_{M}=0$. 
We take a total of $1000$ snapshots from multiple trajectories  and the observed variables are the positions, velocities, and accelerations of all the underlying masses.
 In the graph discovery phase, every other node is initially deemed a potential ancestor for a specified node of interest. We then proceed to iteratively remove the node with the least signal contribution. 
 The step resulting in the largest surge in the noise-to-signal ratio is inferred as one eliminating a crucial ancestor, thereby pinpointing the final ancestor set.
 Fig.~\ref{fignature3}.(c) shows a plot of the noise-to-signal ratio $\frac{\V(n)}{\V(s)+\V(n)}(q)$ as a function of the number $q$ of proposed ancestors for the variable $\ddot{x}_7$ and with $Z$-test quantiles (in the absence of signal, the noise-to-signal ratio should fall within the shaded area with probability $0.9$).  Removing a node essential to the equation of interest causes the noise-to-signal ratio to markedly jump from approximately 25\% to 99\%. 
  Fig.~\ref{fignature3}.(d) shows a plot of the noise-to-signal ratio increments $\frac{\V(n)}{\V(s)+\V(n)}(q)-\frac{\V(n)}{\V(s)+\V(n)}(q-1)$ as a function of the number $q$ of ancestors for the variable $\ddot{x}_7$. Note that
  the increase in the noise-to-signal ratio is significantly higher compared to previous removals when an essential node was removed. 
Therefore, while solely relying on a fixed threshold to decide when to cease the removals might prove challenging,  evaluating the increments in noise-to-signal ratios offers a clear guideline for efficiently and reliably pruning ancestors.
The recovered full graph, depicted in Fig.~\ref{fignature3}.(a),  is  remarkably accurate despite the nonlinear nature of the model and the fact that our prior only encodes that the nonlinearity is smooth. Therefore, our algorithm does not require a dictionary or extensive knowledge of the structure of the unknown functions.
Notably, velocity variables are accurately identified as non-essential and omitted from the ancestors of position and acceleration variables. 
Fig.~\ref{fignature3}.(b), which omits velocity variables for clarity, further elucidates the accurate recovery of dependencies.
The dependencies are the simplest and clearest possible. They match exactly those of the original equations except for the boundary particles for which we recover valid equivalent equations.

\subsection*{The Google Covid 19  open data.}
Consider the COVID-19 data from Google\footnote{The dataset can be accessed \href{https://health.google.com/covid-19/open-data/raw-data}{here}}. We focus on a single country, France, to ensure consistency in the data and avoid considering cross-border variations that are not directly reflected in the data. We select 31 variables that describe the state of the country during the pandemic, spanning over 500 data points, with each data point corresponding to a single day. 
These variables are  categorized as the following datasets: (1) Epidemiology dataset: Includes quantities such as new infections, cumulative deaths, etc.
(2) Hospital dataset: Provides information on the number of admitted patients, patients in intensive care, etc.
(3) Vaccine dataset: Indicates the number of vaccinated individuals, etc.
(4) Policy dataset: Consists of indicators related to government responses, such as school closures or lockdown measures, etc.
Some of these variables are illustrated in Fig.~\ref{fignature3}.(e). The problem is then to analyze this data and identify possible hidden functional relations between these variables. 
Fig.~\ref{fignature3}.(f) shows  the noise-to-signal ratio  (and its increments) as function of the number of ancestors of the ``cumulative number of hospitalized patients'' variable.
Even for this real dataset, the proposed approach gives a clear signal for stopping the pruning process.
 Fig.~\ref{fignature3}.(g) shows the full recovered graph, which is highly clustered.  Fig.~\ref{fignature3}.(h) shows the cluster corresponding to the variable ``schools closing'' revealing 
 that the government either implemented multiple restrictive measures simultaneously or lifted them in unison (except for mask mandates that were on the verge of being identified as noise).
The vaccination cluster (Fig.~\ref{fignature3}.(j)) reveals a linear relationship between variables (signaling redundant information) and the hospitalization cluster  (Fig.~\ref{fignature3}.(i)) reveals a nonlinear one.
 Eliminating redundant nodes leads to the sparse graph shown in Fig.~\ref{fignature3}.(k), which is interpretable and amenable to (both quantitative and qualitative) analysis,

\begin{figure}[h]
    \centering
        \includegraphics[width=\textwidth ]{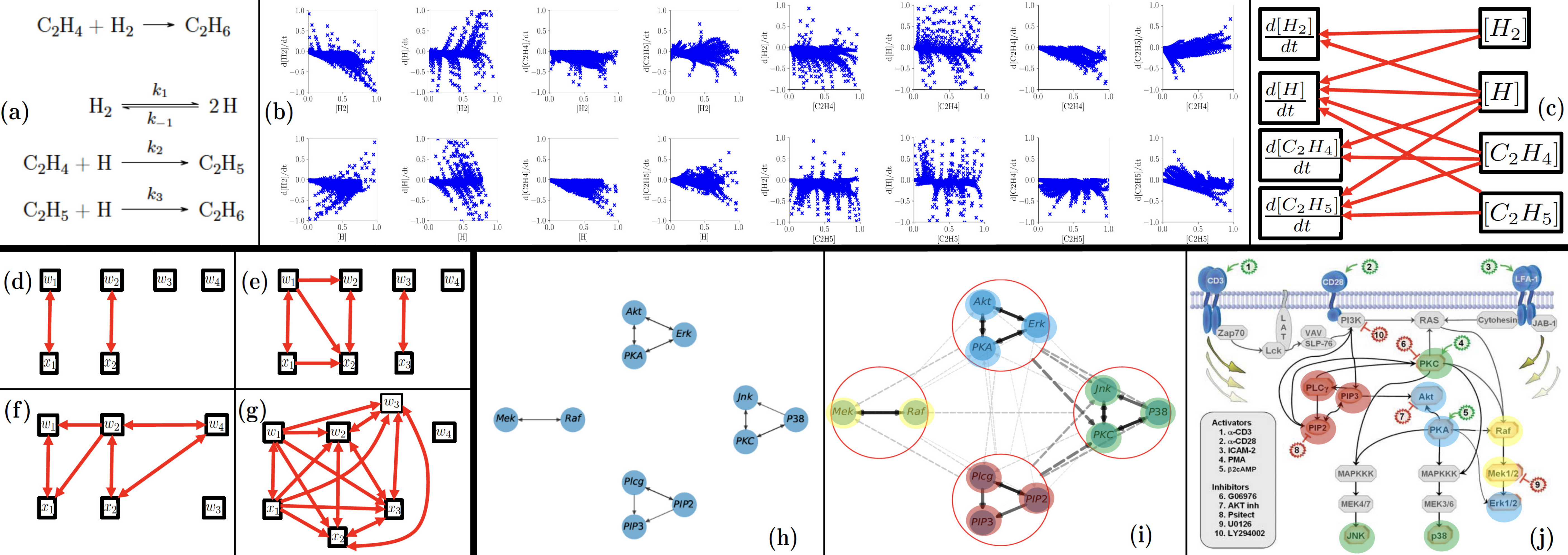}
    \caption{(a-c) Chemical reaction network. (d-g)  Algebraic equations. (h-j) Cell signaling network.}
    \label{fignature4}
\end{figure}

\subsection*{Chemical reaction network.}
In this example, we consider the recovery of a chemical reaction network from concentration snapshots. The reaction network, illustrated in Fig.~\ref{fignature4}.(a) is that of
 the hydrogenation of ethylene $(\text{C}_2 \text{H}_4)$ into ethane $(\text{C}_2 \text{H}_6)$.
The problem is that of recovering the underlying chemical reaction network from snapshots (illustrated in Fig.~\ref{fignature4}.(b)) of concentrations  $[H_2]$, $[H]$, $[C_2H_4]$ and $[C_2H_5]$ and their time derivatives.
 $\frac{d[H_2]}{dt}$,
$\frac{d[H]}{dt}$, $\frac{d[C_2H_4]}{dt}$ and $\frac{d[C_2H_5]}{dt}$.   The proposed approach leads to a perfect recovery of the computational graph (shown in Fig.~\ref{fignature4}.(c)) and 
  a correct identification of quadratic functional dependencies between variables.

\subsection*{Algebraic equations.}
Fig.~\ref{fignature4}.(a-d) illustrate the application of the proposed approach to the recovery of functional dependencies from data satisfying hidden algebraic equations. In all these examples, we have  $d=6$ or $d=7$ variables and $N=1000$ samples from those variables. 
For $d=6$ the variables are $w_1, w_2, w_3, w_4, x_1, x_2$. For $d=7$ the variables are $w_1, w_2, w_3, w_4, x_1, x_2, x_3$.
The samples from the variables $w_1$ to $w_4$ are i.i.d. $\N(0,1)$ random variables, and the samples from $x_1$, $x_2$ (and $x_3$ for $d=7$) are functionally dependent on the other variables.
In the first example, $d=6$ and the samples from $x_1$ and $x_2$ satisfy the equations $x_1=w_1$ and $x_2=w_2$. The algorithm selects the linear kernel and 
Fig.~\ref{fignature4}.(a) shows the  recovered graph (which is exact).
In the second example, $d=7$ and the samples from $x_1, x_2$ and $x_3$ satisfy the equations $x_1=w_1$,  $x_2=x_1^2+1+0.1w_2$, and $x_3=w_3$. The algorithm selects the quadratic kernel and 
Fig.~\ref{fignature4}.(b) shows the  recovered graph (which is exact). Even though $x_2$ can trace back its origin to either $x_1$ and $w_2$ or $w_1$ and $w_2$, the algorithm recognizes $x_1$, $w_1$, and $w_2$ as its ancestors underscoring the importance of eliminating redundant variables when aiming at deriving the sparsest graph. 
In the third example, $d=6$ and the samples from $x_1$ and $x_2$  satisfy the equations $x_1=w_1w_2 $ and $ x_2=w_2\sin(w_4)$. The algorithm selects the nonlinear kernel and 
Fig.~\ref{fignature4}.(c) shows the  recovered graph (which is exact).
In the fourth example,  $d=7$ and the samples from $x_1, x_2$ and $x_3$ satisfy the equations $x_1=w_1$, $x_2=x_1^3+1+0.1w_2 $ and $x_3=(x_1+2)^3+0.1w_3$. Although these equations appear to be cubic, the 
algorithm correctly selects the quadratic kernel and makes an exact recovery of the graph shown in Fig.~\ref{fignature4}. (d) revealing hidden quadratic dependencies between variables.

\subsection*{Cell signaling network}\label{seccellsignaling}
Next, we apply the proposed framework to the example illustrated in Fig.~\ref{fignature1}.(l) from \cite{sachs2005causal} and 
discover a hierarchy of functional dependencies in biological cellular signaling networks. We use single-cell data consisting of the $d=11$ phosphoproteins and phospholipids levels in the human immune system T-cells that were measured using flow cytometry. This dataset was studied from a probabilistic modeling perspective in previous works. While \cite{sachs2005causal} learned a directed acyclic graph to encode causal dependencies, \cite{friedman2008sparse} learned an undirected graph of conditional independencies between the $d$ molecule levels by assuming the underlying data follows a multivariate Gaussian distribution. The latter analysis encodes acyclic dependencies but does not identify directions. In this work, we aim to identify the functional dependencies without imposing strong distributional assumptions on the data. We simply use $N=2,000$ samples chosen uniformly at random from the dataset consisting of $11$ proteins and $7446$ samples of their expressions.
We apply the algorithm in two stages. The first stage of the algorithm uses  only linear and quadratic kernels and recovers the graph shown in Fig.~\ref{fignature4}.(h). It consists 
 of four disconnected clusters where the molecule levels in each cluster are closely related by linear or quadratic dependencies (all connections are linear except for the connection between Akt and PKA, which is quadratic). These edges match a subset of the edges found in the gold standard model identified  in~\cite{sachs2005causal}. With perfect noiseless dependencies, one can define constraints that reduce the total number of variables in the system.
 Second, we learn the connections between groups of variables within each cluster with nonlinear kernels and obtain the graph shown in Fig.~\ref{fignature4}.(i) in which 
 solid arrows indicate strong intra-cluster connections identified in the first level, and dashed lines indicate weaker connections between nodes and clusters identified in the second level. 
The width and grayscale intensities of each edge correspond to its signal-to-noise ratio.
We emphasize that while causal graph recovery methods rely on the control of the sampling of the underlying variables (i.e., the simultaneous measurement of multiple phosphorylated protein and phospholipid components in thousands of individual primary human immune system cells, and perturbing these cells with molecular interventions), the reconstruction obtained by our method did not use this information and recovered functional dependencies rather than causal dependencies. 
Interestingly, the information recovered through our method appears to complement and enhance the findings presented in~\cite{sachs2005causal} (e.g., the linear and noiseless dependencies between variables in the JNK cluster is not something that could easily be inferred from the graph produced in~\cite{sachs2005causal} shown in Fig.~\ref{fignature1}.(j) where we have colored the clusters for comparison).\\
\noindent{{\bf Comparisons}. Using the expected graph reported in~\cite{sachs2005causal} as the ground truth (acknowledging that it may not be entirely accurate), we compare the edges our approach incrementally added to the true graph. Figure~\ref{figbiochem}.(a) reports the number of additional edges that have been added and are not present in the ground truth (false positives) and edges removed that are present in the ground truth graph (false negatives). The added edges are based on the two-stage procedure described above, where we first add the ten intra-cluster connections, followed by inter-cluster connections. Edges are added in decreasing order of signal-to-noise ratio, starting with the strongest. In the reported results, we do not account for the recovery of the direction of ground-truth edges. We note that, up to direction, all intra-cluster connections, along with the inter-cluster connections with the strongest signals are found in the ground truth graph, leading to the initial decrease in false negatives with only one false positive edge (the linear connection P38 $\rightarrow$ Jnk that is not reported in the true graph). With the addition of the remaining (possibly non-spurious) edges, the number of false negatives drops to one, having recovered all edges, except for the one between PKC and Raf, which is identified to be statistically non-informative in our approach.}

\begin{figure}[h]
    \centering
        \includegraphics[width=\textwidth ]{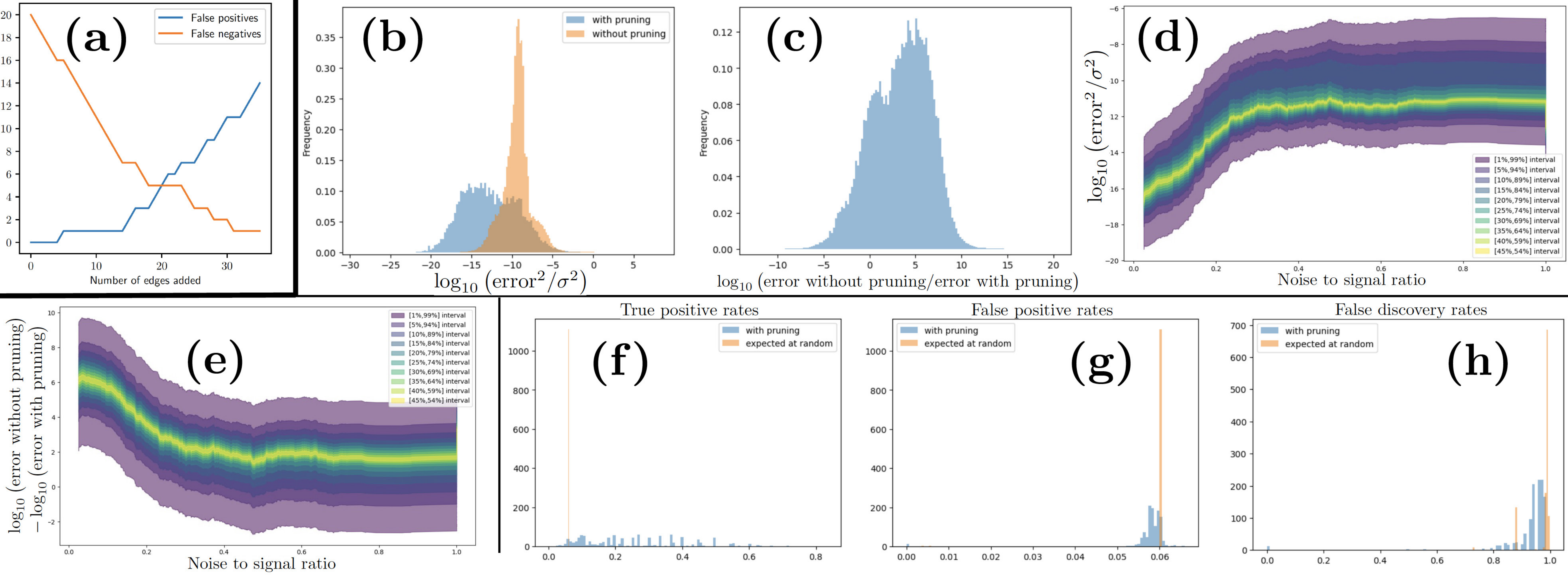}
    \caption{(a) Cell signaling network comparisons.
    (b-h) The BCR reaction benchmark.}
    \label{figbiochem}
\end{figure}

\subsection*{A large-scale chemical reaction network: the BCR reaction benchmark}
Lastly, we stress-test the scalability of our approach by applying it to a large-scale chemical reaction network: the BCR reaction benchmark from \cite{Loman_Ma_Ilin_Gowda_Korsbo_Yewale_Rackauckas_Isaacson_2023}, which encompasses 1122 species. The dataset comprises 2400 snapshots of species concentrations and their corresponding time derivatives. We leveraged JAX's inherent parallelization capabilities \cite{jax2018github} to accelerate our computations, allowing for the simultaneous pruning of multiple nodes while abstracting the complexity of parallel execution.
While the scaling with respect to the number of data points is straightforward, scaling with the number of variables  introduces a trade-off between computational speed and memory footprint. Specifically, the process of identifying the ancestors of various nodes can be expedited by storing a large array for all nodes.
Using a DGX workstation equipped with four Nvidia V100 GPUs, each with 32GB of memory, pruning 190 nodes took approximately three days, projecting a total experiment duration of around one month. Nonetheless, we can mitigate this computational burden by optimizing the  computation of terms of the form $y^T K y$ for the specific quadratic kernel identified for this example. We include the details of such optimization in the supplementary material. By implementing this optimization, the duration of the entire experiment was reduced to just one hour.

In the first experiment, we simulated five trajectories of the associated system of ODEs, recording 1000 snapshots per trajectory. Out of these 5000 snapshots, 2400 were randomly selected as training data, and 2600 as testing data. 
Writing TP, TN, FP and FN for True/False Positives/Negatives and using the metrics True Positive Rate (TPR$=$TP$/$(TP$+$FN)), False Positive Rate (FPR$=$FP$/$(FP$+$TN)), and False Discovery Rate (FDR$=$FP$/$(TP$+$FP)), we observed a TPR of 39.9\%, an FPR of 16.4\%, and an FDR of 97.2\% (indicating that 97.2\% of predicted positives are false). 
This high FDR can be attributed to the limited exploration of the full variable range—1122 in total—by the five trajectories. The trajectories explored a subset of the possible space (near a limit cycle attractor), which led to the recovery of functional dependencies that represent both the chemical reactions and the specific subspace visited.
Furthermore, with 1122 variables, the 630,003 coefficients of the underlying quadratic equations are vastly under-determined with only 2400 data points.
Despite the high FDR in the recovered graph,  as illustrated in Fig.~\ref{figbiochem}.(b-c), the CHD pruning process vastly improves the accuracy (by orders of magnitude) of the estimated functions on the 2600 unseen snapshots by reducing the dimension of the regression problem whenever possible. 
We denote $y_i$ as an observed data point, $\sigma^2$ as the variance of the observed data, $\hat{y}_i$ for a predicted data point without pruning, and $\bar{y}_i$ for a predicted data point post-pruning. Fig.~\ref{figbiochem}.(b) illustrates the histogram of the log-normalized squared errors before and after pruning, expressed as $\log_{10}\big(|y_i-\hat{y}_i|^2/\sigma^2\big)$ and $\log_{10}\big(|y_i-\bar{y}_i|^2/\sigma^2\big)$. The 99th percentile of the normalized squared error is less than $10^{-2}$ for all species.
Fig.~\ref{figbiochem}.(c) displays the histogram of the log-normalized squared error improvements due to pruning, calculated as $\log_{10}\big(|y_i-\hat{y}_i|^2/|y_i-\bar{y}_i|^2\big)$.
Fig.~\ref{figbiochem}.(d-e)
display the quantiles of the histograms post-pruning, conditioned on the noise-to-signal ratio observed at the final pruning step. These plots reveal a clear trend: a higher noise-to-signal ratio at the time of pruning correlates with increased error and diminished improvements in accuracy.

In a second experiment, we formed the data by randomly sampling concentrations uniformly in $[0,1]$  (independently across species and snapshots) and recorded the resulting time derivatives. While this sampling increased the variability of the 2400 snapshots, the model remained vastly underdetermined. The noise-to-signal and bootstrapped (Z-test) ratios  remained close to 0.5, 
suggesting insufficient data
for statistically significant variable importance assessments.
Nonetheless, as depicted in Fig.~\ref{figbiochem}.(f-h), significant insights can still be gleaned from the activations, showing notable improvements when comparing the histograms of the values of TPR, FPR, and FDR obtained with pruning based on these ratios and pruning at random.
 This analysis reveals that even with high dimensionality and scarce data, between 10\% and 80\% of the true ancestors can still be accurately identified.

\section*{Discussions}

\subsection*{Limitations}  In its present form, the proposed approach is limited by several factors. (1) Without access to the sampling of the data, the direction of some edges may not be identifiable. For instance the functional relationship $x-2y=0$ can be represented as both $y=2x$ ($x\rightarrow y$) and $x=y/2$ ($y\rightarrow x$). 
(2) It assumes an additive noise $W$ on the functional relationship $y=f(x)+W$ between the variables $x$ and $y$. 
In a fully probabilistic setting, this structure may be non-additive, i.e., of the form $y=f(x,W)$, which implies discovering a general transition kernel, i.e., a non-Gaussian generative model. Although our method achieves polynomial complexity, in settings where one has access to the distribution of the data, the price to pay, when compared with information-theoretic methods,  is a reduction in generality imposed by the stronger assumption made on the data-generating process. Furthermore, the price to pay for the weaker data requirements (i.e., the absence of interventional data) is that our method recovers functional relationships rather than 
causal ones or conditional dependencies. 
(3) If the (noisy) functional relationship $y=f(x)+W$ is associated with a non-regular (e.g., discontinuous) function $f$ then the kernels discussed above (linear, quadratic and fully nonlinear) will be misspecified and may lead to false negatives. 
The kernel selection and hyperparameter tuning problems in misspecified settings require further work. (4) 
As demonstrated in the BCR reaction application, while the method scales well computationally with an increase in the number of variables, it may still be impacted by the curse of dimensionality. This occurs particularly if the dataset only covers a limited subset of the full range of variable values.
 Given the results displayed in 
 Fig.~\ref{figbiochem}.(b-h) we suspect that this impact could be mitigated by adopting more advanced strategies in place of our current top-down pruning method. Such strategies could involve grouping variables and integrating both top-down and bottom-up iterative approaches.

\subsection*{Conclusions} 
We have developed a comprehensive Gaussian Process framework for solving Type 3 (hypergraph discovery) problems, which is interpretable and amenable to analysis. The breadth and complexity of Type 3 problems significantly surpass those encountered in Type 2 (hypergraph completion), and the initial numerical examples we present serve as a motivation for the scope of Type 3 problems and the broader applications made possible by this approach.
Our proposed algorithm is designed to be fully autonomous, yet it offers the flexibility for manual adjustments to refine the graph's structure recovery.
We emphasize that our proposed approach is not intended to supplant causal inference methods~\cite{pearl2009causality}; see Methods for a complete overview. Instead, it aims to incorporate a distinct kind of information into the graph's structure, namely, the functional dependencies among variables rather than their causal relationships. 
 Additionally, our method eliminates the need for a predetermined ordering of variables, a common requirement in acyclic probabilistic models where determining an optimal order is an NP-hard problem usually tackled using heuristic approaches. Furthermore, our approach can actually be utilized to generate such an ordering by quantifying the strength of the connections it recovers.
The Uncertainty Quantification properties of the underlying Gaussian Processes are integral to the method and could also be employed to quantify uncertainties in the structure of the recovered graph.
We also observe that forming clusters from highly interdependent variables helps to obtain a sparser graph. Additionally, the precision of the pruning process is enhanced by avoiding the division of node activation within the cluster among its separate constituents. We employed this strategy in the recovery of the gene expression graph in Fig.~\ref{fignature4}.(i).
Given the polynomial complexity of our method, promising avenues for future work include applications to large datasets in  genomics and in systems biology, particularly in the reconstruction and intervention of metabolic pathways. These applications benefit from the ability to handle large-scale datasets efficiently, enabling the analysis of complex biological networks.


\subsection*{Data availability}
The data in the paper and the Supplementary Information are available in the \href{https://github.com/TheoBourdais/ComputationalHypergraphDiscovery}{Github repository of the paper}.

\subsection*{Code availability}
The code for the algorithm and its application to various examples are available for download (and as as an installable python library/package) in the \href{https://github.com/TheoBourdais/ComputationalHypergraphDiscovery}{Github repository of the paper}.

\subsection*{Online content}
Supplementary Information is available for this paper.

{\small
\subsection*{Acknowledgements}
HO, TB, PB, XY, and RB acknowledge support from the Air Force Office of Scientific Research under MURI award number FA9550-20-1-0358 (Machine Learning and Physics-Based Modeling and Simulation). Additionally, HO, TB, and PB acknowledge support by the Department of Energy under award number DE-SC0023163 (SEA-CROGS: Scalable, Efficient and Accelerated Causal Reasoning Operators, Graphs and Spikes for Earth and Embedded Systems) and by the Jet Propulsion Laboratory, California Institute of Technology, under a contract with the National Aeronautics and Space Administration. HO and PB further acknowledge support by Beyond Limits (Learning Optimal Models) through CAST (The Caltech Center for Autonomous Systems and Technologies). 
TB acknowledges support from the Kortschak Scholar Fellowship Program. NR acknowledges support from the JPL Researchers on Campus (JROC) program. HO is grateful for the Department of Defense Vannevar Bush Faculty Fellowship. We are grateful to two referees for
 their insightful comments and valuable suggestions.}

\bibliographystyle{plain}
\bibliography{RPS}

\newpage

\section*{Supplementary information}

This supplementary document  provides an overview of refinements and generalizations on our proposed approach (Sec.~\ref{secadddetails}) detailed in subsequent sections.
It includes a summary of the principal components of our algorithm  (Sec.~\ref{secalgovertype3}). It includes 
a reminder on  Type 2 problems (Sec.~\ref{sectype2formal}) and their common GP-based solutions. It discusses the hardness of Type 3 problems, presents an 
  overview of causal inference methods, and a well-posed formulation of Type 3 problems (Sec.~\ref{sechardness}).
 Additionally, this document offers an in-depth description of our developed GP-based solution specifically designed for Type 3 problems (Section~\ref{bigsec4}), along with the corresponding algorithmic pseudo-codes (Section~\ref{Secpseudocode}). It also includes an analysis of the signal-to-noise ratio (SNR) test that is integral to our method (Section~\ref{secsnrt}), and furnishes supplementary details concerning the examples discussed  in the main manuscript (Section~\ref{secsuplinfoexam}).

\section{Additional details on our proposed approach.}\label{secadddetails}

The efficacy of our proposed approach is enhanced through a series of  refinements (implemented in all our examples), which are summarized below and detailed in  sections  \ref{bigsec4}, \ref{Secpseudocode} and \ref{secsnrt}.

\subsection{Ancestor pruning.}
As discussed earlier, rather than using a threshold on the signal-to-noise ratio to prune ancestors, we order the ancestors in decreasing contribution to the signal, the final number $q$ of ancestors is determined as the maximizer of noise to signal ratio increment $\frac{\V(n)}{\V(s)+\V(n)}(q+1)-\frac{\V(n)}{\V(s)+\V(n)}(q)$.

\begin{figure}[h]
    \centering
        \includegraphics[width=0.7\textwidth ]{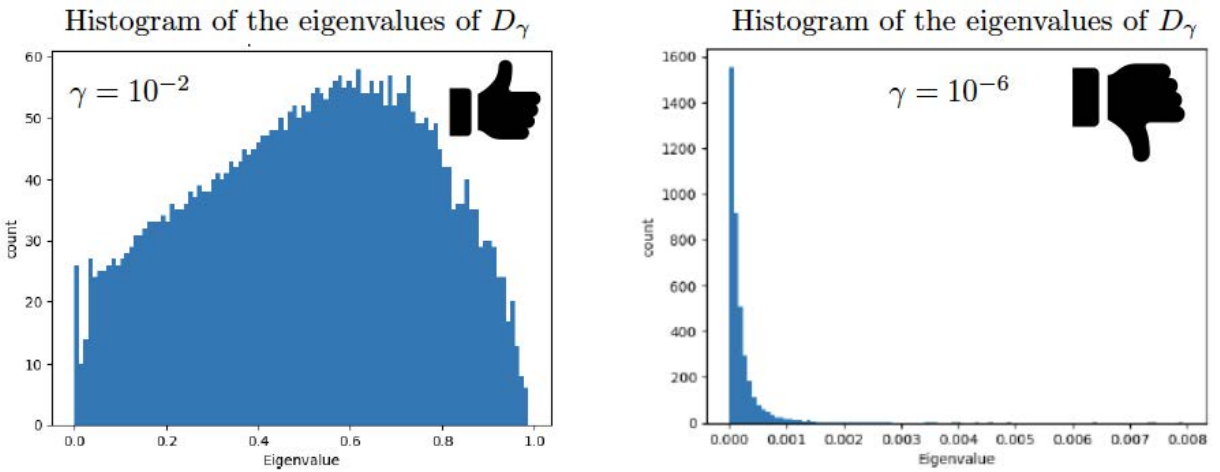}
    \caption{Histogram of the eigenvalues of $D_\gamma$=\eqref{eqkwkbejkejdbduAA} for $\gamma=10^{-2}$ (good choice) and $\gamma=10^{-6}$ (bad choice).}
    \label{fighistogram0}
\end{figure}

\subsection{Parameter Selection.}
The choice of the parameter $\gamma$ in \eqref{eqresidual} is a critical aspect of our proposed approach. We provide a structured approach for selecting $\gamma$ based on the characteristics of the kernel matrix $K_s$. Specifically, when $K_s$ 
is derived from a finite-dimensional feature map $\psi$ (i.e., when $K_s(x,x'):=\psi(x)^T  \psi(x')$ where the range of $\psi$ is finite-dimensional) and the data cannot be interpolated exactly with $K_s$ (the dimension of the range of   $\psi$ is smaller than the number of data points), we employ the regression residual to determine $\gamma$ as follows:
\begin{equation}
\gamma=\min_v\big\|v^T \psi (X)-Y\big\|_{\R^N}^2\,.
\end{equation}
Write $K_s(X,X)$ for the $N\times N$ matrix with entries $K_s(X_i,X_j)$.
Alternatively, when the data can be interpolated exactly with $K_s$ (e.g., when $K_s$ is a universal kernel), we select $\gamma$  (see Fig.~\ref{fighistogram0}) by maximizing  the variance of the eigenvalue histogram of the $N\times N$ matrix 
\begin{equation}\label{eqkwkbejkejdbduAA}
D_\gamma:=\gamma \big(K_s(X,X)+\gamma I\big)^{-1}\,,
\end{equation}
whose eigenvalues are bounded between $0$ and $1$ and converge towards $0$ as $\gamma \downarrow 0$ and towards $1$ as $\gamma \uparrow \infty$.
We can also select $\gamma$ as the median of the eigenvalues of $D_\gamma$.

 \subsection{Z-test quantiles.}
 The noise-to-signal ratio $\frac{\V(n)}{\V(s)+\V(n)}$ associated with  \eqref{eqresidual} admits the representer formula  $\frac{Y^T D_\gamma^2 Y}{Y^T D_\gamma Y}$. Therefore if the data is only comprised of noise (if $Y\sim \sigma^2 Z$ where $Z$ is a random vector with i.i.d. $\cN(0,1)$ entries), then the distribution of the noise-to-signal ratio follows that of the random variable
\begin{equation}
B:= \frac{Z^T D_\gamma^2 Z}{Z^T D_\gamma Z}\,.
\end{equation}
Therefore, the quantiles of $B$ can be used as an interval of confidence on the noise-to-signal ratio if $Y\sim \sigma^2 Z$.  Fig.~\ref{fignature3}.(c) shows these 
 $Z$-test quantiles  (in the absence of signal, the noise-to-signal ratio should fall within the shaded area with probability $0.9$).

\subsection{Generalizations on our proposed approach.}

\subsubsection{Complexity Reduction with Kernel PCA Variant.}
Write $K$ for the kernel associated with the RKHS $\H$ in Problem \ref{Pbkjedn}.
We use a variant of Kernel PCA \cite{mika1998kernel} to significantly reduces the computational complexity of our proposed method, making it primarily dependent on the number of principal nonlinear components in the kernel matrix $K(X,X)$ (the $N\times N$ matrix with entries $K(X_i,X_j)$) rather than the number of data points. 
To describe this write   $\lambda_1\geq \cdots \geq \lambda_r>0$ for the nonzero eigenvalues of $K(X,X)$ indexed in decreasing order and write
 $\alpha_{\cdot,i}$ for the corresponding unit-normalized eigenvectors, i.e. $K(X,X)\alpha_{\cdot,i}=\lambda_i \alpha_{\cdot,i}$. Then $|f(X)|^2=|f(\phi)|^2$, where $f(\phi)$ is the $r$  vector with entries $f(\phi_i):=\sum_{s=1}^N f(X_s)\alpha_{s,i}$.   Furthermore, writing $r'\leq r$ for the smallest index $i$ such that  $\lambda_i/\lambda_1<\epsilon$ where $\epsilon>0$ is some small threshold,
 the complexity of the problem can be further reduced (as in PCA) by  truncating $f(\phi)$ to $f(\phi')=(f(\phi_1),\ldots,f(\phi_{r'}))$  and approximating $\F$ with the space of functions $f\in \H$ such that  $|f(\phi')|^2\approx 0$.

\subsubsection{Generalizing Descendants and Ancestors with Kernel Mode Decomposition.}
We can extend the concept of descendants and ancestors to cover more complex functional dependencies between variables, including implicit ones. This generalization is achieved through a Kernel-based adaptation of Row Echelon Form Reduction (REFR), initially designed for affine systems, and leveraging the principles of Kernel Mode Decomposition \cite{owhadi2019kernelmd}.
To describe the connection with REFR consider the example in which  $\M$ is the manifold of $\R^3$ defined by the affine equations
$ x_1+  x_2+ 3 x_3-2=0$ and $x_1-x_2+x_3=0$, which is equivalent to selecting $\mathcal{F}=\Span\{f_1,f_2\}$ with $f_1(x)=x_1+  x_2+ 3 x_3-2$ and $f_2(x)=x_1-x_2+x_3$ in the problem formulation \ref{Pbkjedn}.
Then, irrespective of how we recover the manifold from data, the hypergraph representation of that manifold is equivalent to the row echelon form reduction of the affine system, and this representation and this reduction require a possibly arbitrary choice of free and dependent variables. So, for instance,  if we declare $x_3$ to be the free variables and $x_1$ and $x_2$ to be the dependent variables, then we can represent the manifold via the equations $x_1=1-2x_3$ and $x_2=1-x_3$ which have the  hypergraph representation depicted in Fig.~\ref{figmanifold1}.(b).
To describe the kernel generalization of REFR assume that the kernel $K$ can be decomposed as the additive kernel 
\begin{equation}
K=K_a+K_s+K_z\,,
\end{equation}
and write $\H_a$, $\H_s$, and $ \H_z$ for the RKHS induced by the kernels $K_a$, $K_s$, $K_z$.
Then a function $f\in \H$ can be decomposed as $f=f_a+f_s+f_z$ with $(f_a,f_s,f_z)\in \H_a\times \H_s\times \H_z$.
Then, generalizing REFR  we can approximate the manifold $\M$ via a manifold parametrized by
equations of the form 
\begin{equation}
f_a+f_s+f_z = 0 \Leftrightarrow g_a=f_s\,
\end{equation}
where  $f_a=-g_a$ and $g_a$ is a given function in $\H_a$ representing a dependent mode, 
 $f_z=0$  represents a zero mode, and $f_s\in \H_s$ is identified (regularized) as the  minimizer of the following variational problem
\begin{equation}
\min_{f_s\in \H_s}\|f_s\|_{K_s}^2+\frac{1}{\gamma} \big|(-g_a+f_s)(\phi)\big|^2\,.
\end{equation}
Taking $g_a(x)=x_1$ and $\H_s+\H_z$ to be a space of functions that does not depend on $x_1$ recovers our initial example \eqref{eqiheuedguydd} (with the pruning process encoded into the selection of $\H_z$).
 This generalization  is motivated by  its potential to recover implicit equations. For example, consider the implicit equation $x_1^2+x_2^2=1$, which can be retrieved by setting the mode of interest to be $g_a(x)=x_1^2$ and allowing $f_s$ to depend only on the variable $x_2$.

\section{Algorithm Overview for Type 3 problems: An Informal Summary}\label{secalgovertype3}

In this section, we provide an accessible overview of our algorithm's key components, which are further detailed in Algorithms \ref{alg1} and \ref{alg2} in Section \ref{Secpseudocode}. Our method focuses on determining the edges within a hypergraph. To achieve this, we consider each node individually, finding its ancestors and establishing edges from these ancestors to the node in question. While we present the algorithm for a single node, it can be applied iteratively to all nodes within the graph.
\bigbreak
\textbf{Algorithm for finding the ancestors of a node}:\begin{enumerate}
    \item \textbf{Initialization:} We start by assuming that all other nodes are potential ancestors of the current node.
    \item 
    \textbf{Selecting a Kernel}: We choose a kernel function, such as linear, quadratic, or fully nonlinear kernels (refer to Example \ref{exreg1}). The kernel selection process is analogous to the subsequent pruning steps, involving the determination of a parameter $\gamma$, regression analysis, and evaluation based on signal-to-noise ratios.
    \begin{itemize}
        \item \textbf{Kernel Selection Method:} 
         The choice of kernel follows a process similar to the subsequent pruning steps, including $\gamma$ selection, regression analysis, and signal-to-noise ratio evaluation.
        \item \textbf{Low Signal-to-Noise Ratio for All Kernels:}  If the signal-to-noise ratio is insufficient for all possible kernels, the algorithm terminates, indicating that the node has no ancestors.
    \end{itemize}
    \item \textbf{Pruning Process:}  While there are potential ancestors left to consider (details in Section \ref{seciterleastimp}):\begin{enumerate}
        \item \textbf{Identify the Least Important Ancestor:}  Ancestors are ranked based on their contribution to the signal (see Sec.~\ref{seckmdq}).
        \item \textbf{Noise prior:} Determine the value of $\gamma$ (see Section \ref{secselgamma}).
        \item \textbf{Regression Analysis:}  Predict the node's value using the current set of ancestors, excluding the least active one (i.e., the one contributing the least to the signal). We employ Kernel Ridge Regression with the selected kernel function and parameter $\gamma$ (see Sec.~\ref{secappchdgen} and \ref{secappchdpar}).
        \item \textbf{Evaluate Removal:} 
         Compute the regression signal-to-noise ratio (see Sec.~\ref{secstnratioint} and \ref{secsnrt}):
        \begin{itemize}
            \item \textbf{Low Signal-to-Noise Ratio:}  If the signal-to-noise ratio falls below a certain threshold, terminate the algorithm and return the current set of ancestors (see Section \ref{seciterleastimpaltsnrth}).
            \item\textbf{Adequate Signal-to-Noise Ratio:}  If the signal-to-noise ratio is sufficient, remove the least active ancestor and continue the pruning process.
        \end{itemize}
    \end{enumerate}
\end{enumerate}

\begin{figure}[h]
    \centering
        \includegraphics[width=1\textwidth ]{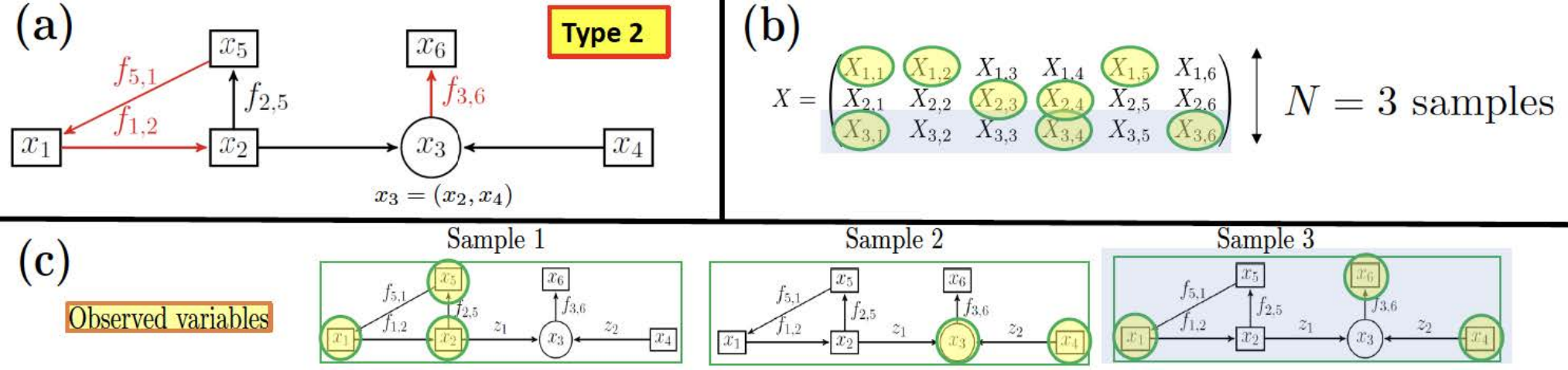}
    \caption{Formal description of Type 2 problems.}
    \label{figtype2}
\end{figure}

\section{Type 2 problems: Formal description and GP-based Computational Graph Completion}\label{sectype2formal}
\subsection{Formal description of Type 2 problems}\label{subsec:sectype2formal}
Consider a computational graph (as illustrated in Fig.~\ref{figtype2}.(a)) where nodes represent variables and edges are directed and they represent functions. These functions may be known or unknown. In Fig.~\ref{figtype2}.(a), edges associated with unknown functions ($f_{5,1}$, $f_{1,2}$, $f_{3,6}$) are colored in red, and those associated with known functions ($f_{2,5}$) are colored in black.
Round nodes are utilized to symbolize variables, which are derived from the concatenation of other variables (e.g, in Fig.~\ref{figtype2}.(a), $x_3=(x_2,x_4)$). Therefore, the underlying graph is, in fact, a hypergraph where functions may map groups of variables to other groups of variables, and we use round nodes to illustrate the grouping step. 
Given partial observations derived from $N$ samples of the graph's variables, we introduce a problem, termed a Type 2 problem, focused on approximating all unobserved variables and unknown functions.
Using Fig.~\ref{figtype2}.(a)-(b) as an illustration we call a vector $(X_{s,1},\ldots, X_{s,6})$ a sample from the graph if its entries are variables satisfying the functional dependencies imposed by the structure of the graph (i.e., $X_{s,1}=f_{5,1}(X_{s,5})$, $X_{s,2}=f_{1,2}(X_{2,s})$, $X_{s,3}=(X_{s,2}, X_{s,4})$, $X_{s,5}=f_{s,5}(X_{s,s})$, and $X_{s,6}=f_{3,6}(X_{s,3})$. These samples can be seen as the rows of given matrix $X$ illustrated in Fig.~\ref{figtype2}.(b) for $N=3$.
By partial observations, we mean that only a subset of the entries of each row may be observed, as illustrated in 
Fig.~\ref{figtype2}.(b)-(c). Note that a Type 2 problem combines a regression problem (approximating the unknown functions of the graph) with a matrix completion/data imputation problem (approximating the unobserved entries of the matrix $X$).

\subsection{Reminder on Computational Graph Completion for Type 2 problems}
Within the context of Sec.~\ref{subsec:sectype2formal}, the proposed GP solution to Type 2 problems is to simply replace unknown functions by  GPs and compute their Maximum A Posteriori (MAP)/Maximum Likelihood Estimation (MLE) estimators given available data and constraints imposed by the structure of the graph.
Taking into account the example depicted in Fig.~\ref{figtype2}, and substituting $f_{5,1}$, $f_{1,2}$, and $f_{3,6}$ with independent GPs, each with kernels $K, G$, and $\Gamma$ respectively, the objective of this MAP solution becomes minimizing  $ \|f_{5,1}\|_{K}^2+\|f_{1,2}\|_{G}^2+\|f_{3,6}\|_{\Gamma}^2$  (writing $\|f\|_K$ for the RKHS norm of $f$ induced by the kernel $K$) subject to the constraints imposed by the data and the functional dependencies encoded into the structure of the graph.

\begin{figure}[h]
    \centering
        \includegraphics[width=1\textwidth ]{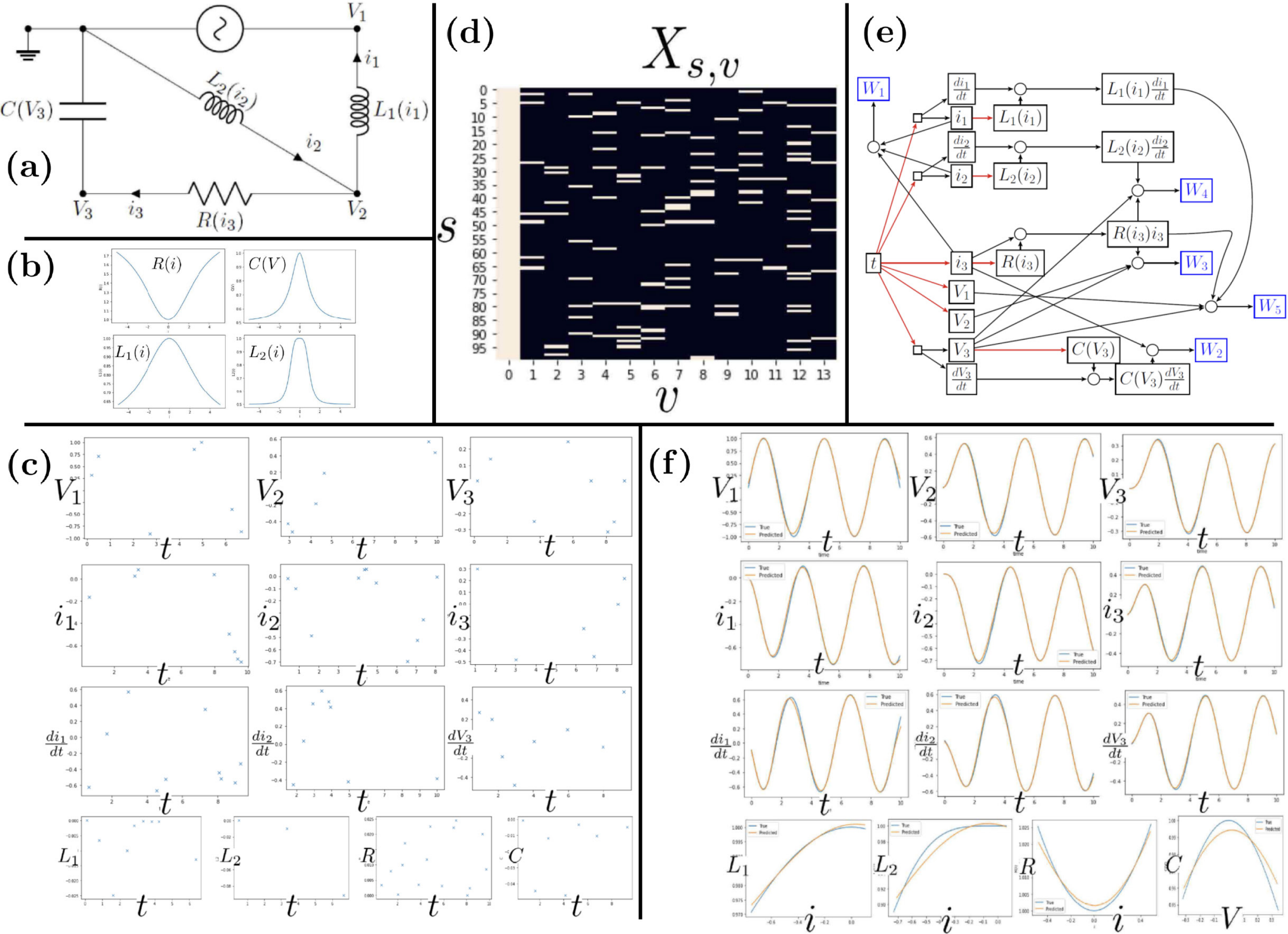}
    \caption{(a) Electric circuit. (b) Resistance, capacitance, and inductances are nonlinear functions of currents and voltages  (c) Measurements. (d) Kirchhoff's circuit laws. (e) The  computational graph with unknown functions represented as red edges. (f) Recovered functions.}
    \label{figelectriccircuit}
\end{figure}

\subsection{A system identification example.}
In order to exemplify Computational Graphical Completion (CGC), consider the system identification problem depicted in Fig.~\ref{figelectriccircuit}, sourced from \cite{owhadi2022computational}. Our objective is to identify a nonlinear electric circuit, as illustrated in Fig.~\ref{figelectriccircuit}.(a), from scarce measurement data. The nonlinearity of the circuit emanates from the resistance, capacitance, and inductances, which are nonlinear functions of currents and voltages, as shown in Fig.~\ref{figelectriccircuit}.(b).
Assuming these functions to be unknown, along with all currents and voltages as unknown time-dependent functions, we operate the circuit between times $0$ and $10$. Measurements of a subset of variables, representing the system's state, are taken at times $t_s=s/10$ for $s\in {0,\ldots,99}$.
Given these measurements, the challenge arises in approximating all unknown functions that define currents and voltages as time functions, capacitance as a voltage function, and inductances and resistance as current functions. Fig.~\ref{figelectriccircuit}.(c) displays the available measurements, which are notably sparse, preventing us from reconstructing the underlying unknown functions independently. Thus, their interdependencies must be utilized for approximation.
It is crucial to note that the system's state variables are interconnected through functional relations, as per Kirchhoff's laws for this nonlinear electric circuit, illustrated in Fig.~\ref{figelectriccircuit}.(d). These functional dependencies can be conceptualized as a computational graph, depicted in Fig.~\ref{figelectriccircuit}.(e), where nodes represent variables and directed edges represent functions. Known functions are colored in black, unknown functions in red, and round nodes aggregate variables, meaning edges map groups of variables, forming a hypergraph.
The CGC solution involves substituting the graph's unknown functions with Gaussian Processes (GPs), which may be independent or correlated, and then approximating the unknown functions with their Maximum A Posteriori (MAP) estimators, given the available data and the functional dependencies embedded in the graph's structure. Fig.~\ref{figelectriccircuit}.(f) showcases the true and recovered functions, demonstrating a notably accurate approximation despite the data's scarcity.

This simple example generalizes to an abstract framework detailed in \cite{owhadi2022computational}.  This framework has 
 a wide range of applications because most problems in CSE can also be formulated as completing computational graphs representing dependencies between functions and variables, and they can be solved in a similar manner by replacing unknown functions with GPs and by computing their MAP/EB estimator given the data. These problems include those illustrated in Fig.~\ref{fignature1}.(d-h).

\section{Hardness and well-posed formulation of Type 3 problems.}\label{sechardness}
In this subsection, we describe why Type 3 problems are challenging and why they can even be intractable if not formalized and approached properly.

\subsection{Curse of combinatorial complexity.}
First, the problem suffers from the curse of combinatorial complexity in the sense that the number of hypergraphs associated with $N$ nodes blows up rapidly with $N$. 
As an illustration, Fig.~\ref{figchd3v00} shows some of the hypergraphs associated with only three nodes.
A lower bound on that number is the  A003180 sequence, which answers the following question \cite{ishihara2001enumeration}: 
given $N$ unlabeled vertices, how many different hypergraphs in total can be realized on them by counting the equivalent hypergraphs only once?  
For $N=8$, this lower bound is  $\approx 2.78 \times 10^{73}$.

\begin{figure}[h]
    \centering
        \includegraphics[width=1\textwidth ]{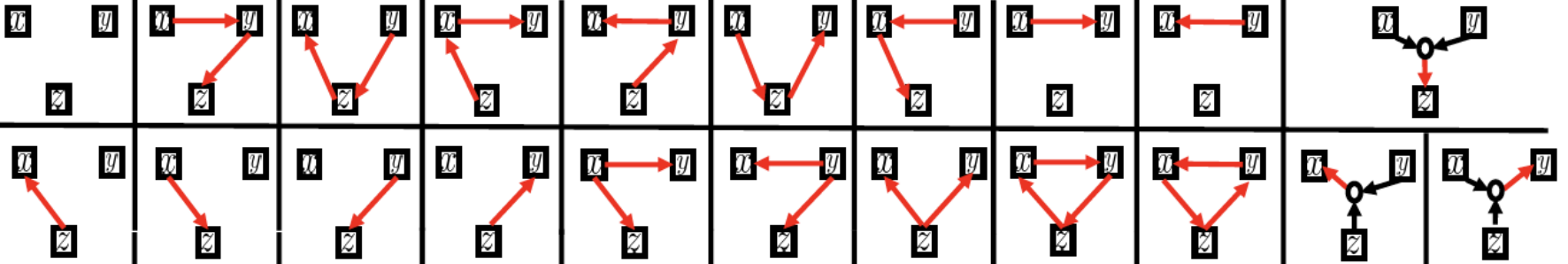}
    \caption{Computational Hypergraph Discovery with three variables }
    \label{figchd3v00}
\end{figure}

\subsection{Nonidentifiability and implicit dependencies.}
Secondly, it is important to note that, even with an infinite amount of data, the exact structure of the hypergraph might not be identifiable. To illustrate this point, let's consider a problem where we have $N$ samples from a computational graph with variables $x$ and $y$. The task is to determine the direction of functional dependency between $x$ and $y$. Does it go from $x$ to $y$ (represented as {\scalebox{0.4}{
\begin{tikzpicture}[->,>=stealth',shorten >=1pt,auto,node distance=3cm,
                    thick,main node/.style={rectangle,draw,font=\sffamily\Large\bfseries}]

\node[main node] (1) {$x$};
\node[main node] (2) [right of=1] {$y$};

\path[every node/.style={font=\sffamily\Large\bfseries}]
    (1) edge node [above ] {$f$} (2);

\end{tikzpicture}} ), or from $y$ to $x$ (represented as  {\scalebox{0.4}{
\begin{tikzpicture}[->,>=stealth',shorten >=1pt,auto,node distance=3cm,
                    thick,main node/.style={rectangle,draw,font=\sffamily\Large\bfseries}]

\node[main node] (1) {$y$};
\node[main node] (2) [right of=1] {$x$};

\path[every node/.style={font=\sffamily\Large\bfseries}]
    (1) edge node [above ] {$f$} (2);

\end{tikzpicture}})?

If we refer to Fig.~\ref{figambiguity}.(a), we can make a decision because $y$ can only be expressed as a function of $x$. In contrast, if we examine Fig.~\ref{figambiguity}.(b), the decision is also straightforward because $x$ can solely be written as a function of $y$. However, if the data mirrors the scenario in Fig.~\ref{figambiguity}.(c), it becomes challenging to decide as we can write both $y$ as a function of $x$ and $x$ as a function of $y$.
Further complicating matters is the possibility of implicit dependencies between variables. As illustrated in Fig.~\ref{figambiguity}.(d), there might be instances where neither $y$ can be derived as a function of $x$, nor $x$ can be represented as a function of $y$.

\begin{figure}[h]
    \centering
        \includegraphics[width=1\textwidth ]{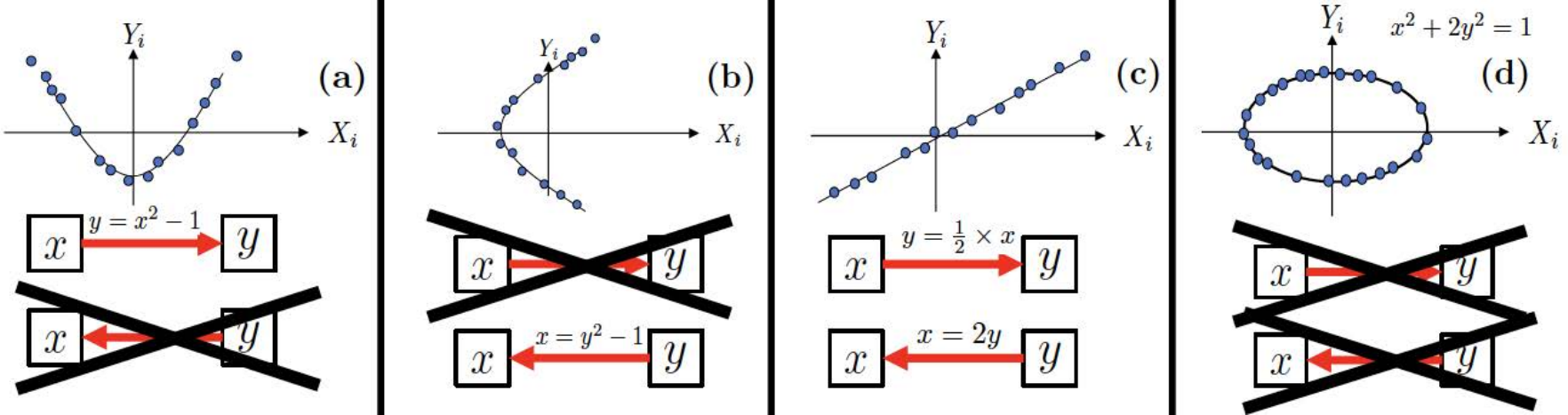}
    \caption{{\small The structure of the hypergraph is identifiable in (a), (b), and non-identifiable in (c). The relationship  between variables is implicit in (d).}}
    \label{figambiguity}
\end{figure}

\subsection{Causal inference and probabilistic graphs.}
Causal inference methods broadly consist of two approaches: constraint and score-based methods. While constraint-based approaches are asymptotically consistent, they only learn the graph up to an equivalence class~\cite{spirtes1991algorithm}. Instead, score-based methods resolve ambiguities in the graph's edges by evaluating the likelihood of the observed data for each graphical model. For instance, they may assign a higher evidence to $y\to x$ over $x\to y$ if the conditional distribution $x|y$ exhibits less complexity than $y|x$. The complexity of searching over all possible graphs, however, grows super-exponentially with the number of variables. Thus, it is often necessary to use approximate, but more tractable, search-based methods~\cite{chickering2002optimal, peters2017elements} or alternative criteria based on sensitivity analysis~\cite{data2016sensitivity}. For example, the preference could lean towards $y\to x$ rather than $x\to y$ if $y$ demonstrates less sensitivity to errors or perturbations in $x$.
In contrast, our proposed GP method  avoids the growth in complexity by performing a guided pruning process that assesses the contribution of each node to the signal. We also emphasize that our method 
is not limited to learning acyclic graph structures as it can identify feedback loops between variables.
Alternatively, methods for learning probabilistic undirected graphical models, also known as Markov networks, identify the graph structure by assuming the data is randomly drawn from some probability distribution~\cite{drton2017structure}. In this case, edges in the graph (or lack thereof) encode conditional dependencies between the nodes. A common approach learns the graph structure by modeling the data as being drawn from a multivariate Gaussian distribution with a sparse inverse covariance matrix, whose zero entries indicate pairwise conditional independencies~\cite{friedman2008sparse}. Recently, this approach has been extended using models for non-Gaussian distributions, e.g., in~\cite{baptista2021learning, ren2021learning}, as well as kernel-based conditional independence tests~\cite{zhang2011kernel}. In this work, we learn functional dependencies rather than causality or probabilistic dependence. We emphasize that we also do not assume the data is randomized 
or impose strong assumptions, such as additive noise models, in the data-generating process. 

We complete this paragraph by comparing the hypergraph discovery framework to structure learning for Bayesian networks and structural equation models (SEM). Let $x \in \R^d$ be a random variable with probability density function $p$ that follows the autoregressive factorization $p(x) = \prod_{i=1}^{d} p_i(x_i|x_1,\dots,x_{i-1})$ given a prescribed variable ordering. Structure learning for Bayesian networks aims to find the ancestors of variable $x_i$, often referred to as the set of parents $Pa(i) \subseteq \{1,\dots,i-1\}$, in the sense that $p_i(x_i|x_1,\dots,x_{i-1}) = p_i(x_i|x_{Pa(i)})$. Thus, the variable dependence of the conditional density $p_i$ is identified by finding the parent set so that $x_i$ is conditionally independent of all remaining preceding variables given its parents, i.e., $x_i \perp x_{1:i-1 \setminus Pa(i)} \vert x_{Pa(i)}$. Finding ancestors that satisfy this condition requires performing conditional independence tests, which are computationally expensive for general distributions~\cite{shah2020hardness}. Alternatively, SEMs assume that each variable $x_i$ is drawn as a function of its ancestors with additive noise, i.e, $x_i = f(x_{Pa(i)}) + \epsilon_i$ for some function $f$ and noise $\epsilon$~\cite{peters2017elements}. For Gaussian noise $\epsilon_i \sim \mathcal{N}(0,\sigma^2)$, each marginal conditional distribution in a Bayesian network is given by $p_i(x_i|x_{1:i-1}) \propto \exp(-\frac{1}{2\sigma^2}\|x_i - f(x_{1:i-1})\|^2)$. Thus, finding the parents for such a model by maximum likelihood estimation corresponds to finding the parents that minimize the expected mean-squared error $\|x_i - f(x_{Pa(i)})\|^2.$ Our approach minimizes a related objective, without imposing the strong probabilistic assumptions that are required in SEMs and Bayesian Networks. 
We also observe that while the graph structure identified in Bayesian networks is influenced by the specific sequence in which variables are arranged  (a concept exploited in numerical linear algebra \cite{SchaeferSullivanOwhadi17, schafer2021sparse} where Schur complementation is equivalent to conditioning GPs and a carefully ordering  leads to the accuracy of the Vecchia approximation $p_i(x_i|x_1,\dots,x_{i-1}) \approx  p_i(x_i|x_{i-k},\dots,x_{i-1})$ \cite{vecchia1988estimation}), the graph recovered by our approach remains unaffected by any predetermined ordering of those variables.

\begin{figure}[h]
    \centering
        \includegraphics[width=1\textwidth ]{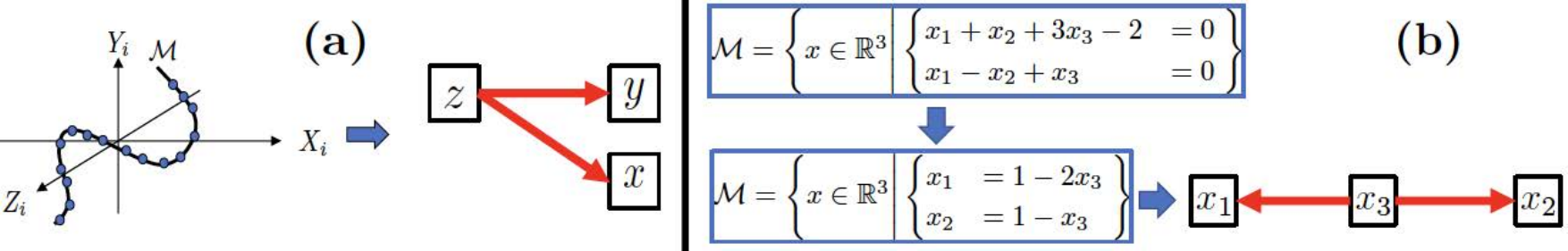}
    \caption{ (a) CHD formulation as a manifold discovery problem and hypergraph representation (b) The hypergraph representation of an affine manifold is equivalent to its Row Echelon Form Reduction.}
    \label{figmanifold1}
\end{figure}

\subsection{Well-posed formulation of the problem.}

In this paper, we focus on a formulation of the problem that remains well-posed even when the data is not randomized, i.e., we formulate the problem as the following manifold learning/discovery problem.

\begin{Problem}\label{Pbkjedn}
Let $\H$ be a Reproducing Kernel Hilbert Space (RKHS) of functions  mapping
$\R^d$ to $\R$. Let $\F$  be a closed linear subspace of $\H$ and let $\M$ be a subset of $\R^d$ such that $x\in \M$ if and only if  $f(x)=0$ for all $f\in \F$.
Given the (possibly noisy and nonrandom) observation of $N$ elements,  $X_1,\ldots,X_N$, of $\M$  approximate  $\M$.
\end{Problem}

To understand why problem \ref{Pbkjedn} serves as the appropriate formulation for hypergraph discovery, consider a manifold $\M\subset\R^d$. Suppose this manifold can be represented by a set of equations, expressed as a  collection of functions $(f_k)_k$ satisfying $\forall x\in\M,\ f_k(x)=0$.  To keep the problem tractable, we assume a certain level of regularity for these functions, necessitating they belong to a RKHS $\H$, ensuring the applicability of kernel methods for our framework. Given that any linear combination of the $f_k$ will also be evaluated to zero on $\M$, the relevant functions are those within the span of the $f_k$, forming a closed linear subspace of $\H$ denoted as $\F$.
The manifold $\M$ can be subsequently represented by a graph or hypergraph (see Fig.~\ref{figmanifold1}.(a)), whose ambiguity can be resolved through a deliberate decision to classify some variables as free and others as dependent. This selection could be arbitrary, informed by expert knowledge, or derived from probabilistic models or sensitivity analysis.

\section{A Gaussian Process method for Type 3 problems}\label{bigsec4}
\subsection{Affine case and Row Echelon Form Reduction.}\label{secaffine}
To describe the proposed solution to  Problem \ref{Pbkjedn}, we start with a simple example.  In this example $\H$ is a space of affine functions $f$ of the form 
\begin{equation}
f(x)=v^T \psi(x) \text{ with } \psi(x):=\begin{pmatrix}1\\x\end{pmatrix}\text{ and } v\in \R^{d+1}\,,.
\end{equation}
As a particular instantiation (see Fig.~\ref{figmanifold1}.(b)), we assume $\M$ to be the manifold of $\R^3$ ($d=3$) defined by the affine equations
\begin{equation}\label{eqhgvytfrt}
\M=\Bigg\{x\in \R^3 \Bigg|\begin{cases} x_1+  x_2+ 3 x_3-2&=0 \\x_1-x_2+x_3&=0\end{cases} \Bigg\}\,,
\end{equation}
which is equivalent to selecting $\mathcal{F}=\Span\{f_1,f_2\}$ with $f_1(x)=x_1+  x_2+ 3 x_3-2$ and $f_2(x)=x_1-x_2+x_3$ in the problem formulation \ref{Pbkjedn}.

Then, irrespective of how we recover the manifold from data, the hypergraph representation of that manifold is equivalent to the row echelon form reduction of the affine system, and this representation and this reduction require a possibly arbitrary choice of free and dependent variables. So, for instance, for the  system \eqref{eqhgvytfrt}, if we declare $x_3$ to be the free variables and $x_1$ and $x_2$ to be the dependent variables, then we can represent the manifold via the equations 
\begin{equation}
\M=\Bigg\{x\in \R^3 \Bigg|\begin{cases} x_1&=1-2x_3 \\ x_2&=1-x_3\end{cases} \Bigg\}\,,
\end{equation}
which have the  hypergraph representation depicted in Fig.~\ref{figmanifold1}.(b).

Now, in the $N>d$ regime where the number of data points is larger than the number of variables, the manifold can simply be approximated via a variant of PCA. Take $f^*\in\F$, we have $f^*(x)={v^*}^T\psi(x)$ for a certain $v^*\in\R^{d+1}$. Then for $X_s\in\M$, $f^*(X_s)=\psi(X_s)^Tv^*=0$. Defining 
\begin{equation}\label{eqjebiedbedb}
C_N:=\sum_{s=1}^N \psi(X_s)\psi(X_s)^T
\end{equation}
we see that $f^*(X_s)=0$ for all $X_s$ is equivalent to $C_Nv^*=0$. Since $N>d$, we can thus identify $\F$ exactly as $\{v^T\psi\text{ for }v\in Ker(C_N)\}$. We then obtain the manifold \begin{equation}\label{jehbdeudybeyb}
\M_N=\big\{x\in \R^d \mid v^T \psi(x)=0 \text{ for }v\in \operatorname{Span}(v_{r+1},\ldots,v_{d+1}) \big\}
\end{equation}
where $\operatorname{Span}(v_{r+1},\ldots,v_{d+1})$ is the zero-eigenspace of $C_N$. Here we write $\lambda_1\geq \cdots \geq \lambda_r >0= 
\lambda_{r+1}=\cdots=\lambda_{d+1}$ for the eigenvalues of $C_N$ (in decreasing order), and $v_1,\ldots,v_{d+1}$ for the corresponding eigenvectors 
 ($C_N v_i=\lambda_i v_i$).
 The proposed approach extends to the noisy case (when the data points are perturbations of elements of the manifold) by simply replacing
 the zero-eigenspace of the covariance matrix by the linear span of the eigenvectors associated with eigenvalues that are smaller than some  threshold $\epsilon>0$, i.e., by approximating  $\M$ with \eqref{jehbdeudybeyb} where $r$ is such that $\lambda_1\geq \cdots \geq \lambda_r \geq  \epsilon >
\lambda_{r+1}\geq \cdots \geq \lambda_{d+1}$.
In this affine setting \eqref{jehbdeudybeyb} allows us to estimate  $\M$ directly without RKHS norm minimization/regularization  because linear regression does not require regularization in the sufficiently large data regime. Furthermore the process of pruning ancestors  can be replaced by that of identifying sparse elements $v\in \operatorname{Span}(v_{r+1},\ldots,v_{d+1}) $ such that $v_i=1$.

\begin{figure}[h]
    \centering
        \includegraphics[width=0.7\textwidth ]{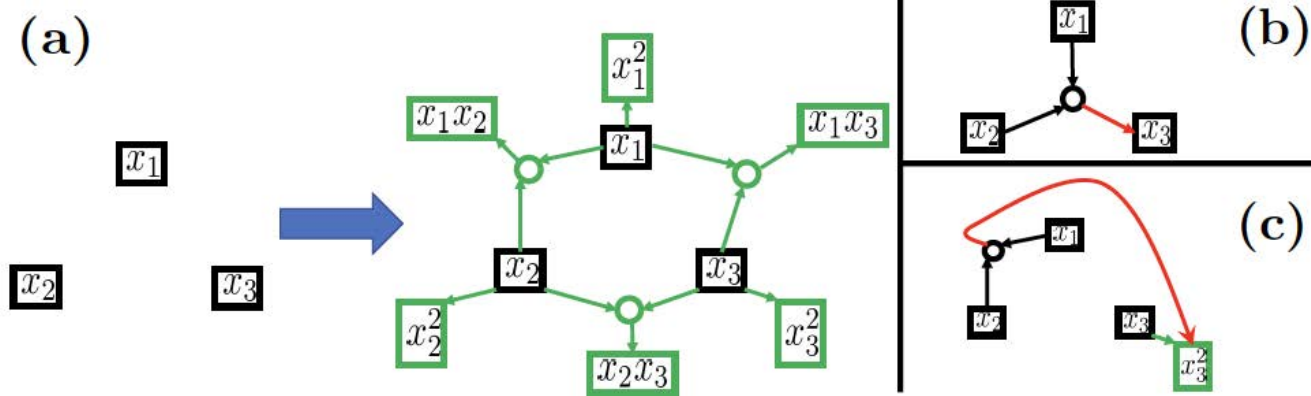}
    \caption{Feature map  generalization}
    \label{figchd10}
\end{figure}

\subsection{Feature map generalization.}\label{secfeaturemap}
This simple approach can be generalized by generalizing the underlying feature map $\psi$ used to define the space of functions (writing $d_\S$ for the dimension of the range of $\psi$)
\begin{equation}
\H=\big\{f(x)=v^T \psi(x)\mid v\in \R^{d_{\mathcal{S}}}\big\} \,.
\end{equation}
 For instance, if we use the feature  map 
 \begin{equation}\label{eqjhedhued}
\psi(x):=\big(1, \ldots,x_i,\ldots, x_i x_j, \ldots\big)^T\,
 \end{equation}
  then $\H$ becomes a space of quadratic polynomials on $\R^d$, i.e.,
  \begin{equation}
\H=\Big\{f(x)=v_0+\sum_{i} v_i x_i+\sum_{i\leq j} v_{i,j} x_i x_j\mid v\in \R^{d_\S}\Big\}\,,
\end{equation}
  and, in  the large data regime ($N>d_\S$), identifying quadratic dependencies between variables becomes equivalent to 
(1) adding nodes to the hypergraph corresponding to secondary variables obtained from primary variables $x_i$ through known functions (for \eqref{eqjhedhued}, these secondary variables are the quadratic monomials  $x_i x_j$, see Fig.~\ref{figchd10}.(a)), and (2)
identifying affine dependencies between the variables of the augmented hypergraph. The problem can, therefore, be reduced to the previous affine case.
Indeed, as in the affine case, the manifold can  then be approximated in the regime where the number of data points is larger than the dimension $d_\S$ of the feature map by 
\eqref{jehbdeudybeyb}, where $v_r,\ldots,v_N$ are the eigenvectors of $C_N=$\eqref{eqjebiedbedb} whose eigenvalues are zero (noiseless case) or smaller than some threshold $\epsilon>0$ (noisy case).

Furthermore, the hypergraph representation of the manifold is equivalent to a feature map generalization of Row Echelon Form Reduction to nonlinear systems of equations.
For instance, choosing $x_3$ as the dependent variable and $x_1,x_2$ as the free variables, $\mathcal{M}=\{x\in \R^3 \mid x_3-5x_1^2+x_2^2-x_1 x_2=0 \}$ can be represented as 
in Fig.~\ref{figchd10}.(b) where the round node represents the concatenated variable $(x_1,x_2)$ and the red arrow represents a quadratic function.
The generalization also enables the representation of implicit equations by selecting secondary variables as free variables. For instance, selecting $x_3^2$ as the free variable and  $x_1,x_2$ as the free variables, $\mathcal{M}=\{x\in \R^3 \mid x_1^2+ x_2^2+x_3^2-1=0 \}$ can be represented as 
in Fig.~\ref{figchd10}.(c).

\subsection{Kernel generalization and regularization.}\label{seckernelgen}

This feature-map extension of the previously discussed affine case  can evidently be generalized to arbitrary degree polynomials and to other basis functions.
However, as the dimension $d_\S$ of the range of the feature map $\psi$ increases beyond the number $N$ of data points, the problem becomes underdetermined:  the data only provides partial information about the manifold, i.e., it is not sufficient to uniquely determine the manifold. 
Furthermore, if the dimension of the feature map is infinite, then we are always in that low data regime, and we have the additional difficulty that we cannot directly compute with that feature map.
On the other hand, if $d_\S$ is finite (i.e., if the dictionary of basis functions is finite), then some elements of $\F$ (some constraints defining the manifold $\M$) may not be representable or well approximated as equations of the form $v^T \psi(x)=0$. To address these conflicting requirements, we need to kernelize and regularize the proposed approach (as done in interpolation).

\subsubsection{The kernel associated with the feature map.}\label{seckernelchdprem}
To describe this kernelization, we assume that the feature map $\psi$ maps $\R^d$ to some Hilbert space $\S$ that could be infinite-dimensional, and we write $K$ for the kernel defined by that feature map. To be precise, we now consider the setting where the feature map  $\psi$ is a function from $\R^d$ to a (possibly infinite-dimensional separable) Hilbert (feature) space $\S$ endowed with the inner product $\<\cdot,\cdot\>_\S$. To simplify notations, we will still write
$v^T w$ for $\<v,w\>_\S$ and $v w^T$ for the linear operator mapping $v'$ to $v \<w,v'\>_\S$.
Let 
\begin{equation}
\H:=\{v^T \psi(x)\mid v\in \S\}
\end{equation}
 be the space of functions mapping $\R^d$ to $\R$ defined by the feature map $\psi$.
To avoid ambiguity, assume (without loss of generality) that the identity
$v^T \psi(x)=w^T \psi(x)$ holds for all $x\in \R^d$ if and only if $v=w$. 
It follows that for $f\in \H$ there exists a unique $v\in \S$ such that $f=v^T \psi$. For $ f, g\in \H$ with $f=v^T \psi$ and $g=w^T \psi$, we can then define
\begin{equation}
\<f,g\>_\H:=v^T  w\,.
\end{equation}
Observe that $\H$ is a Hilbert space endowed with the inner product $\<\cdot,\cdot\>_\H$.
For $x,x'\in \X$, write
\begin{equation}\label{eqkerpsidef}
K(x,x'):=\psi(x)^T  \psi(x')\,,
\end{equation}
for the kernel defined by $\psi$ and observe that $(\H,\<\cdot,\cdot\>_\H)$ is the RKHS defined by the kernel $K$ (which is assumed to contain $\F$  in Problem \ref{Pbkjedn}). Observe in particular that for $f=v^T \psi \in \H$, $K$ satisfies the reproducing property
\begin{equation}\label{eqreproducingAAAQ}
\<f,K(x,\cdot)\>_\H=v^T  \psi(x)=f(x)\,.
\end{equation}

\subsubsection{Complexity Reduction with Kernel PCA Variant.}\label{secKPCAvariant}
We will now show that the previous feature-map PCA variant (characterizing the subspace of $f\in \H$ such that $f(X)=0$) can be kernelized as a variant of kernel PCA \cite{mika1998kernel}. 
To describe this write $K(X,X)$ for the $N\times N$ matrix with entries $K(X_i,X_j)$. Write  $\lambda_1\geq \lambda_2\geq \cdots \geq \lambda_{r}>0$ for the nonzero eigenvalues of $K(X,X)$ indexed in decreasing order and write
 $\alpha_{\cdot,i}$ for the corresponding unit-normalized eigenvectors, i.e.
\begin{equation}\label{eqkpca}
K(X,X)\alpha_{\cdot,i}=\lambda_i \alpha_{\cdot,i}\text{ and }|\alpha_{\cdot,i}|=1\,.
\end{equation}
Write $f(X)$ for the $N$ vector with entries $f(X_s)$.
For $i\leq r$, write 
\begin{equation}
\phi_i:= \sum_{s=1}^N \updelta_{X_s}\alpha_{s,i}
\end{equation}
 and 
 \begin{equation}
 f(\phi_i):=\sum_{s=1}^N f(X_s)\alpha_{s,i}\,.
 \end{equation}
 Write $f(\phi)$ for the $r$  vector with entries $f(\phi_i)$.

 Then, we have the following proposition.
 \begin{Proposition}
 The subspace  of  functions $f\in \H$ such that $f(\phi)=0$ is equal to the subspace of $f\in \H$ such that $f(X)=0$. Furthermore for 
 $f\in \H$ with feature map representation $f=v^T \psi$ with $v\in \S$ we have the identity (where $C_N=$\eqref{eqjebiedbedb})
 \begin{equation}\label{eqjgugyuuygbQ}
v^T C_N v=  |f(\phi)|^2=|f(X)|^2 \,.
\end{equation} 
 \end{Proposition}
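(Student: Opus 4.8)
The plan is to reduce the entire statement to finite-dimensional linear algebra on $\R^N$ by introducing the sampling operator attached to the data. I would define $\Psi:\S\to\R^N$ by $\Psi v:=(\psi(X_s)^T v)_{s=1}^N$, so that every $f=v^T\psi\in\H$ has $f(X)=\Psi v$. Its adjoint is $\Psi^T e=\sum_s e_s\,\psi(X_s)$, and a direct computation yields the two factorizations that drive everything: $\Psi\Psi^T=K(X,X)$ and $\Psi^T\Psi=C_N$. The first link in the chain is then immediate, since $v^T C_N v=v^T\Psi^T\Psi v=\|\Psi v\|_{\R^N}^2=|f(X)|^2$.

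The key observation, and the step I expect to carry the real weight, is that feature-map representability of $f$ confines $f(X)$ to the range of the Gram matrix. Concretely, $f(X)=\Psi v\in\Img(\Psi)$, and I would establish $\Img(\Psi)=\Img(\Psi\Psi^T)=\Img(K(X,X))$. On the finite-dimensional space $\R^N$ this range identity follows from $\Ker(\Psi\Psi^T)=\Ker(\Psi^T)$ (if $\Psi\Psi^T e=0$ then $\|\Psi^T e\|_\S^2=e^T\Psi\Psi^T e=0$) together with the symmetry of $K(X,X)$, which makes its range the orthogonal complement of its kernel. No infinite-dimensional difficulty intrudes, because $\Psi$ has rank at most $N$ and the argument lives entirely on the $\R^N$ side.

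With this in hand, both assertions reduce to a single orthogonal projection. Collecting the unit eigenvectors into $A:=[\alpha_{\cdot,1},\dots,\alpha_{\cdot,r}]$, the symmetry and positive semidefiniteness of $K(X,X)$ let me take these orthonormal and spanning $\Img(K(X,X))$, so that $AA^T$ is the orthogonal projection $P$ onto $\Img(K(X,X))$. Writing $f(\phi)=A^T f(X)$, the condition $f(\phi)=0$ says exactly that $f(X)\perp\Img(K(X,X))$; since $f(X)$ already lies in $\Img(K(X,X))$, this forces $f(X)=0$, while the converse is trivial, which proves the equality of the two subspaces. For the last identity, $|f(\phi)|^2=f(X)^T AA^T f(X)=|P f(X)|^2$, and because $f(X)\in\Img(K(X,X))=\Img(P)$ we have $Pf(X)=f(X)$, giving $|f(\phi)|^2=|f(X)|^2$ and closing the chain $v^T C_N v=|f(\phi)|^2=|f(X)|^2$.
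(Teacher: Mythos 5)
Your proof is correct, but it takes a genuinely different route from the paper's. The paper works on the feature-space side: it takes the eigendecomposition $C_N v_i = \hat\lambda_i v_i$, shows by multiplying through by $\psi^T(x)$ that the eigenfunctions $f_i = v_i^T\psi$ sampled at the data yield unit eigenvectors $\alpha_{\cdot,i} = \lambda_i^{-1/2} f_i(X)$ of $K(X,X)$ with the same eigenvalues (i.e., it re-derives the kernel-PCA spectral correspondence), and then obtains \eqref{eqjgugyuuygbQ} from the resulting formula $v_i^T v = \lambda_i^{-1/2} f(X)^T\alpha_{\cdot,i}$. You instead introduce the sampling operator $\Psi:\S\to\R^N$ and rest everything on the two factorizations $\Psi\Psi^T = K(X,X)$, $\Psi^T\Psi = C_N$, plus the purely linear-algebraic range identity $\Img(\Psi)=\Img(\Psi\Psi^T)$; the whole proposition then collapses to the observation that $f(X)$ lies in $\Img(K(X,X))$, on which $AA^T$ acts as the identity. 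Your argument is shorter and more elementary: it never touches spectral theory on the (possibly infinite-dimensional) feature space $\S$, using only the adjoint and the finite-dimensional spectral theorem for $K(X,X)$, and it makes the mechanism transparent — representability by the feature map confines $f(X)$ to the range of the Gram matrix. What the paper's longer route buys is the explicit spectral correspondence itself (coincidence of nonzero spectra and the eigenvector dictionary between $C_N$ and $K(X,X)$), which the paper reuses, e.g., for the truncation of $\phi$ in the remark that follows. One small point in both proofs: with repeated nonzero eigenvalues of $K(X,X)$, the identity $|f(\phi)|^2 = |f(X)|^2$ requires the $\alpha_{\cdot,i}$ to be chosen mutually orthogonal, not merely unit-normalized as \eqref{eqkpca} literally states; you handle this explicitly by taking them orthonormal, which is the right reading.
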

 \begin{proof}
 Write  $\hat{\lambda}_1\geq \hat{\lambda}_2\geq \cdots \geq \hat{\lambda}_{\hat{r}}>0$ for the nonzero eigenvalues of $C_N=$\eqref{eqjebiedbedb} indexed in decreasing order. Write $v_1,\ldots,v_{r}$ for the corresponding eigenvectors, i.e.,
\begin{equation}\label{eqliwhdiuhedAAAQ}
C_N v_i =\hat{\lambda}_i v_i\,.
\end{equation}
Observing that 
\begin{equation}
C_N=\sum_{i=1}^r \hat{\lambda}_i v_i v_i^T
\end{equation}
 we deduce that the zero-eigenspace of $C_N$ is the set of vectors $v\in \S$ such that
$v^T v_i=0$ for $i=1,\ldots,r$.  
 Write $f_i:=v_i^T  \psi$. Observe that for $f=v^T \psi$, we have
$v_i^T v=\<f_i,f\>_K$.
Multiplying \eqref{eqliwhdiuhedAAAQ} by $\psi^T(x) $ implies
\begin{equation}\label{eqhwgdyged0AAAQ}
\sum_{s=1}^N K(x,X_s)f_i(X_s) = \hat{\lambda}_i f_i(x)
\end{equation}
\eqref{eqhwgdyged0AAAQ} implies that for $f=v^T \psi$
\begin{equation}\label{eqeiuhduedhAAAQ}
v_i^T v=\sum_{s=1}^N  \hat{\lambda}_i^{-1} f_i(X_s) \<K(\cdot,X_s),f\>_K=\sum_{s=1}^N \hat{\lambda}_i^{-1} f_i(X_s) f(X_s)
\end{equation}
where we have used the reproducing property \eqref{eqreproducingAAAQ} of $K$ in the last identity.
Write
\begin{equation}\label{eqlkesehdeudhqAAAQ}
\hat{\alpha}_{s,i}:=\lambda_i^{-1/2}f_i(X_s)\,.
\end{equation}
 Using \eqref{eqhwgdyged0AAAQ} with $x=X_{s'}$ implies that $\hat{\alpha}_{\cdot,i}$ is an eigenvector of the $N\times N$ matrix $K(X,X)$ with eigenvalue $\hat{\lambda}_i$. 
Taking $f=f_i$ in \eqref{eqeiuhduedhAAAQ} implies that $1=v_i^T v_i=|\hat{\alpha}_{\cdot,i}|^2$. Therefore, the $\hat{\alpha}_{\cdot,i}$ are unit-normalized. Summarizing, this analysis (closely related to the one found in kernel PCA \cite{mika1998kernel}) shows that the nonzero eigenvalues of $K(X,X)$ coincide with those of $C_N$ and we have $\hat{r}=r$, $\hat{\lambda_i}=\lambda_i$ and $\hat{\alpha}_{\cdot,i}=\alpha_{\cdot,i}$.
Furthermore, 
\eqref{eqeiuhduedhAAAQ} and \eqref{eqlkesehdeudhqAAAQ} imply  that for $i\leq r$, $v\in \S$ and $f=v^T \psi$, we have
\begin{equation}\label{eqjgugyuuygAAAQ}
v_i^T v= \lambda_i^{-1/2} f(X)\alpha_{\cdot,i}\,.
\end{equation}
 The identity \eqref{eqjgugyuuygAAAQ} then implies \eqref{eqjgugyuuygbQ}.
\end{proof}

\begin{Remark}
As in PCA the dimension/complexity of the problem can be further reduced by truncating $\phi$ to $\phi'=(\phi_1,\ldots,\phi_{r'})$ where $r'\leq r$ is identified as the smallest index $i$ such that  $\lambda_i/\lambda_1<\epsilon$ where $\epsilon>0$ is some small threshold.
\end{Remark}

\subsubsection{Kernel Mode Decomposition.}\label{seckmdq}
When the feature map $\psi$ is infinite-dimensional, the data only provides partial information about the constraints defining the manifold in the sense that  
$f(X)=0$ or equivalently $f(\phi)=0$ is a necessary but not sufficient condition for the zero level set of $f$ to be a valid constraint for the manifold (for $f$ to be such that $f(x)=0$ for all $x\in \M$). So we are faced with the following problems:  (1) How to regularize? (2) How do we identify free and dependent variables?
(3) How do we identify valid constraints for the manifold?
The proposed solution will be based on the Kernel Mode Decomposition (KMD)  framework introduced in \cite{owhadi2019kernelmd} (which shares conceptual foundations with Smoothing Spline ANOVA \cite{wahba2003introduction}).

\paragraph{\textbf{Reminder on KMD}}
We will now present a quick reminder on KMD in the setting of the following mode decomposition problem. So, in this problem, we have an unknown function $f^\dagger$ mapping some input space $\X$ to the real line $\R$. We assume that this function can be written as a sum of $m$ other unknown functions $f^\dagger_i$  which we will call modes, i.e.,
\begin{equation}
f^\dagger=\sum_{i=1}^m f^\dagger_i\,.
\end{equation}

We assume each mode $f^\dagger_i$ to be an unknown element of some RKHS $\H_{K_i}$ defined by some kernel $K_i$.
Then we consider the problem in which given the data $f^\dagger(X)=Y$ (with $(X,Y)\in \X^N \times \R^N$) we seek to approximate the $m$ modes composing the target function $f^\dagger$. Then, we have the following theorem. 

\begin{Theorem} \cite{owhadi2019kernelmd}
Using the relative  error in the  product norm $\|(f_1,\ldots,f_m)\|^2:=\sum_{i=1}^m \|f_i\|_{K_i}^2$ as a loss,
the minimax optimal recovery of $(f^\dagger_1,\ldots,f^\dagger_m)$ is $(f_1,\ldots,f_m)$ with
  \begin{equation}\label{eqiwdihdeuds}
f_i(x)=K_i(x,X) K(X,X)^{-1} Y\,,,
\end{equation}
where $K$ is the additive kernel
  \begin{equation}
K=\sum_{i=1}^m K_i\,.
\end{equation}
\end{Theorem}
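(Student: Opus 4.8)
The plan is to recognize the mode-recovery problem as a single optimal recovery problem in the product RKHS $\mathcal{B}:=\H_{K_1}\times\cdots\times\H_{K_m}$ equipped with the norm $\|(f_1,\ldots,f_m)\|^2=\sum_{i=1}^m\|f_i\|_{K_i}^2$, and then to exploit the duality between minimax recovery under relative error and Gaussian conditioning. First I would introduce the bounded linear measurement operator $\Phi:\mathcal{B}\to\R^N$ defined by $\Phi(g_1,\ldots,g_m)=\sum_{i=1}^m g_i(X)$, so that the data constraint reads $\Phi(\mathbf{f}^\dagger)=Y$ with $\mathbf{f}^\dagger=(f^\dagger_1,\ldots,f^\dagger_m)$, and the stated loss is the worst-case relative product-norm error of recovering an element of $\mathcal{B}$ from the observation $\Phi$. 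The central principle, which I would invoke from the game-theoretic duality of \cite{owhadi2019kernelmd, OwhScobook2018}, is that the minimax-optimal recovery map for this relative error is linear and coincides with the conditional expectation $\mathbf{g}\mapsto\E[\Xi\mid\Phi(\Xi)=\Phi(\mathbf{g})]$ of the canonical centered Gaussian field $\Xi=(\xi_1,\ldots,\xi_m)$ on $\mathcal{B}$, where the $\xi_i\sim\cN(0,K_i)$ are independent; equivalently, it is the minimizer of $\sum_i\|g_i\|_{K_i}^2$ over all $(g_1,\ldots,g_m)$ consistent with the data.

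Second, I would evaluate this optimal strategy explicitly. Setting $\xi:=\sum_i\xi_i\sim\cN(0,K)$ with $K=\sum_i K_i$, independence of the $\xi_i$ gives the cross-covariances $\Cov(\xi_i(x),\xi(X_s))=K_i(x,X_s)$ and $\Cov(\xi(X_s),\xi(X_{s'}))=K(X_s,X_{s'})$, since $\Phi(\Xi)=\xi(X)$. The standard Gaussian conditioning formula then yields $\E[\xi_i(x)\mid\xi(X)=Y]=K_i(x,X)K(X,X)^{-1}Y$, which is exactly \eqref{eqiwdihdeuds}. I would cross-check this against the variational characterization: minimizing $\sum_i\|g_i\|_{K_i}^2$ subject to $\sum_i g_i(X)=Y$ via a representer-theorem/Lagrange-multiplier argument forces $g_i=K_i(\cdot,X)\mu$ for a single multiplier $\mu\in\R^N$; substituting into the constraint gives $K(X,X)\mu=Y$, hence $\mu=K(X,X)^{-1}Y$ and the same formula (invertibility of $K(X,X)$ following from the standing nondegeneracy assumption on the feature map, or otherwise using a pseudo-inverse). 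This also confirms that $\sum_i f_i=K(\cdot,X)K(X,X)^{-1}Y$ is the ordinary kernel interpolant of the data for the additive kernel $K$.

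Third, the genuine content is the minimax optimality asserted in the first step, which is the main obstacle. To prove it directly rather than cite it, I would establish the saddle point by splitting $\mathcal{B}=\Ker\Phi\oplus(\Ker\Phi)^\perp$: the observation $Y$ determines the $(\Ker\Phi)^\perp$-component of the truth but leaves its $\Ker\Phi$-component free, and reducing the worst-case ratio over the adversary's choice of ground truth shows that any estimator placing mass in $\Ker\Phi$, or failing to interpolate the data, incurs a worst-case relative error strictly exceeding that of the conditional mean. The delicate point is the matching lower bound — producing an adversarial sequence of modes against which no estimator beats the conditional mean — and making this rigorous in the (possibly infinite-dimensional) feature space, where the splitting and the identification of the Gaussian measure as the adversary's saddle-point strategy require care; this is precisely the step supplied by the duality theorem of \cite{owhadi2019kernelmd}. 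The explicit formulas of the second step are then simply the evaluation of that optimal strategy.
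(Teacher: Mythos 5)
Your proposal is correct and follows essentially the same route as the paper and the reference it cites: the paper states this theorem without proof, attributing it to \cite{owhadi2019kernelmd}, and immediately afterwards presents exactly the two characterizations you derive --- the Gaussian conditioning formula $f_i(x)=\E\big[\xi_i(x)\mid \xi(X)=Y\big]$ for independent $\xi_i\sim\cN(0,K_i)$ and the variational problem \eqref{eqkjhbkyuyg6} --- whose equivalence with \eqref{eqiwdihdeuds} you verify by the same independence/cross-covariance computation and representer-theorem argument. Your identification of the minimax-optimality claim as the one step genuinely requiring the game-theoretic duality of \cite{owhadi2019kernelmd} is also accurate; that is precisely the content the paper outsources to its citation.
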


The GP interpretation of this optimal recovery result is as follows. Let $\xi_i \sim \cN(0,K_i)$ be $m$ independent centered  GPs with kernels $K_i$. Write $\xi$ for the additive GP $\xi:=\sum_{i=1}^m \xi_i$. 
 \eqref{eqiwdihdeuds} can be recovered by replacing the modes $f_i^\dagger$ by independent centered GPs $\xi_i \sim \cN(0,K_i)$ with kernels $K_i$ and approximating the mode $i$ by conditioning $\xi_i$ on the available data $\xi(X)=Y$
where $\xi:=\sum_{i=1}^m \xi_i$ is the additive GP obtained by summing the independent GPs $\xi_i$, i.e.,
\begin{equation}
f_i(x)=\mathbb{E}\big[\xi_i(x)\mid \xi(X)=Y\big]\,.
\end{equation}
Furthermore 
 $(f_1,\ldots,f_m)$ can also be identified as the minimizer of
\begin{equation}\label{eqkjhbkyuyg6}
\begin{cases}\text{Minimize }&\sum_{i=1}^m \|f_i\|_{K_i}^2\\
\text{over }& (f_1,\ldots,f_m)\in  \mathcal{H}_{K_1}\times \cdots \times \mathcal{H}_{K_m}\\
\text{s. t. }&(\sum_{i=1}^m f_i)(X)=Y\,.\end{cases}\end{equation}
The variational formulation \eqref{eqkjhbkyuyg6} can be interpreted as a generalization of Tikhonov regularization which can be recovered by selecting $m=2$, $K_1$ to be a smoothing kernel (such as a Mat\'{e}rn kernel) and $K_2(x,y)=\sigma^2 \updelta(x-y)$ to be a white noise kernel.

Now, this abstract KMD approach  \cite{owhadi2019kernelmd} is associated with a quantification of how much each mode contributes to the overall data or how much each individual GP $\xi_i$ explains the data.
More precisely, the activation of the mode $i$ or GP $\xi_i$ can be quantified as 
\begin{equation}\label{eq:activation_formula}
p(i)=\frac{\|f_i\|_{K_i}^2}{\|f\|_K^2}\,,
\end{equation}
where $f=\sum_{i=1}^m f_i$. These activations $p(i)$ satisfy $p(i)\in [0,1]$ and $\sum_{i=1}^m p(i)=1$ they can be thought of as a generalization of Sobol sensitivity indices \cite{sobol2001global, sobol1993sensitivity, owen2013variance} to the nonlinear setting in the sense that they are associated with the following variance representation/decomposition  \cite{owhadi2019kernelmd} (writing $\<\cdot,\cdot\>_K$ for the RKHS inner product induced by $K$):
\begin{equation}
\operatorname{Var}\big[\<\xi,f\>_K\big] =\|f\|^2_K=\sum_{i=1}^m \|f_i\|_{K_i}^2=\sum_{i=1}^m \operatorname{Var}\big[\<\xi_i,f\>_K\big]
\end{equation}

\paragraph{\textbf{Application to CHD, general case.}}\label{secappchdgen}
Now, let us return to our original manifold approximation problem \ref{Pbkjedn} in the kernelized setting of \eqref{eqkerpsidef}.
Given the data $X$ we cannot regress an element $f\in \F$ directly since the minimizer of $\|f\|_K^2+\gamma^{-1}\|f(X)\|_{\R^N}^2$ is the null function. To identify the functions $f\in \F$, we need to decompose them into modes that can be interpreted as a generalization of the notion of free and dependent variables. To describe this, 
assume that the kernel $K$ can be decomposed as the additive kernel 
\begin{equation}
K=K_a+K_s+K_z. 
\end{equation}
Then $\H_K= \H_{K_a}+\H_{K_s}+\H_{K_z}$   implies that
for all function $f\in \H_K$, $f$ can be decomposed as $f=f_a+f_s+f_z$ with $(f_a,f_s,f_z)\in \H_a\times \H_s\times \H_z$.

\begin{Example}\label{exreg1}
As a running example, take $K$ to be the following additive kernel 
\begin{equation}\label{eqfullynonlinear}
K(x,x')=1+\beta_1 \sum_i x_i x_i' + \beta_2 \sum_{i\leq j} x_i x_j x_i' x_j' + \beta_3 \prod_i (1+k(x_i,x_i'))\,,
\end{equation}
that is the sum of a linear kernel, a quadratic kernel, and a fully nonlinear kernel. Take $K_a$ to be the part of the linear kernel that depends only on $x_1$, i.e.,
\begin{equation}
K_a(x,x')=\beta_1 x_1 x_1'\,.
\end{equation}
Take $K_s$ to be the part of the kernel that does not depend on $x_1$, i.e.,
\begin{equation}\label{eqldwdwod1}
K_s=1+\beta_1 \sum_{i\not=1} x_i x_i' + \beta_2 \sum_{i\leq j, i,j\not=1} x_i x_j x_i' x_j' + \beta_3 \prod_{i\not=1} (1+k(x_i,x_i'))\,.
\end{equation}
And take $K_z$ to be the remaining portion,
\begin{equation}
K_z=K-K_a-K_s\,.
\end{equation}
\end{Example}

Therefore the following questions are equivalent:
\begin{itemize} 
\item 	Given a function $g_a$ in the RKHS $\H_{K_a}$ defined by the kernel $K_a$ is there a function $f_s$ in the  RKHS $\H_{K_s}$ defined by the kernel $K_s$ such that $g_a(x)\approx f_s(x)$ for $x\in \M$? 
\item 	Given a function $g_a\in \H_{K_a}$  is there a function $f$ in the RKHS $\H_K$ defined by the kernel $K$ such that $f(x)\approx 0$ for $x\in \M$ and such that its
 $f_a$  mode is $-g_a$ and its $f_z$ mode is zero?
\end{itemize}

Then, the natural answer to the questions  is to 
identify the modes of the constraint  $f=f_a+f_s+f_z\in \mathcal{H}$ (such that $f(x)\approx 0$ for $x\in \M$ ) such that 
 $f_a=-g_a$ and $f_z=0$ by selecting $f_s$ to be the  minimizer of the following variational problem
\begin{equation}\label{eqjhjebdhbedd}
\min_{f_s\in \H_s}\|f_s\|_{K_s}^2+\frac{1}{\gamma} \big|(-g_a+f_s)(\phi)\big|^2\,.
\end{equation}
This is equivalent to introducing the additive GP $\xi=\xi_a+\xi_s+\xi_z+\xi_n$ whose modes are the independent GPs $\xi_a\sim \mathcal{N}(0,K_a)$, $\xi_s\sim \mathcal{N}(0,K_s)$, 
$\xi_z\sim \mathcal{N}(0,K_z)$, $\xi_n\sim \mathcal{N}(0,\gamma \updelta(x-y))$ (we use the label ``n'' in reference to ``noise''), and then recovering $f_s$ as 
\begin{equation}
f_s=\E\big[\xi_s\mid \xi(X)=0, \xi_a=-g_a, \xi_z=0\big]\,.
\end{equation}

\paragraph{\textbf{Application to CHD, particular case}.}\label{secappchdpar}
Taking $g_a(x)=x_1$ for our running example \ref{exreg1}, the previous questions are, as illustrated in Fig.~\ref{fignature2}(b), equivalent to asking whether there exists a function $f_s\in \H_{K_s}$ that does not depend on $x_1$ (since $K_s$ does not depend on $x_1$) such that\begin{equation}
x_1\approx f_s(x_2,\ldots,x_d) \text{ for } x\in \M\,.
\end{equation}
Therefore, the mode $f_a$ can be thought of as a dependent mode (we use the label ``a'' in reference to  ``ancestors''), the mode $f_s$ as a free mode (we use the label ``s'' in reference to ``signal''), the mode $f_z$ as a zero mode.

While our numerical illustrations have primarily focused on the scenario where $g_a$ takes the form of $g_a(x)=x_i$, and we aim to express $x_i$ as a function of other variables, the generality of our framework is motivated by  its potential to recover implicit equations. For example, consider the implicit equation $x_1^2+x_2^2=1$, which can be retrieved by setting the mode of interest to be $g_a(x)=x_1^2$ and allowing $f_s$ to depend only on the variable $x_2$.

\subsubsection{Signal-to-noise ratio.}\label{secstnratioint}
Now, we are led to the following question: since the mode $f_s$ (the minimizer of \eqref{eqjhjebdhbedd}) always exists and is always unique, how do we know that it leads to a valid constraint?
To answer that question, we compute the activation of the GPs used to regress the data. We write 
\begin{equation}
\V(s):=\|f_s\|_{K_s}^2\,,
\end{equation}
 for the activation of the signal GP $\xi_s$ and 
\begin{equation}
\V(n):=\frac{1}{\gamma} \big|(-g_a+f_s)(X)\big|^2
\end{equation}
for the activation of the noise GP $\xi_n$, and then these allow us to define a signal-to-noise ratio defined as
\begin{equation}\label{eqjhguyg65}
\frac{\V(s)}{\V(s)+\V(n)}\,.
\end{equation}
Note that this corresponds to activation ratio of the noise GP defined in (\ref{eq:activation_formula}). This ratio can then be used to test the validity of the constraint in the sense that if $V(s)/(V(s)+V(n))>\tau$  (with $\tau=0.5$ as a prototypical example), then the data is mostly explained by the signal GP and the constraint is valid.
If $V(s)/(V(s)+V(n))<\tau$, then the data is mostly explained by the noise GP and the constraint is not valid.

\subsubsection{Iterating by removing the least active modes from the signal.}\label{seciterleastimp}
If the constraint is valid, then we can next compute the activation of the modes composing the signal. To describe this, we assume that the kernel $K_s$ can be decomposed as the  additive kernel 
\begin{equation}
K_s=K_{s,1}+\cdots+K_{s,m}\,,
\end{equation}
 which results in $\H_{K_s}= \H_{K_{s,1}}+\cdots+\H_{K_{s,m}}$, 
which results in the fact that $\forall f_s \in \H_s$, $f_s$ can be decomposed as 
\begin{equation}
f_s= f_{s,1}+\cdots+f_{s,m}\,,
\end{equation}
 with 
$f_{s,i}\in \H_{K_{s,i}}$.
The activation of the mode $i$ can then be quantified as $p(i)=\|f_{s,i}\|_{K_{s,i}}^2/\|f_s\|_{K_s}^2$, which combined with 
$\|f_s\|_{K_s}^2=\sum_{i=1}^m \|f_{s,i}\|_{K_{s,i}}^2$ leads to $\sum_{i=1}^m p(i)=1$.

As our running example \ref{exreg1}, we can decompose $K_s=$\eqref{eqldwdwod1} as the sum of an affine kernel, a quadratic kernel, and a fully nonlinear kernel, i.e., $m=3$,
$K_{s,1}=1+\beta_1 \sum_{i\not=1} x_i x_i' $, $K_{s,2}=\beta_2 \sum_{i\leq j, i,j\not=1} x_i x_j x_i' x_j' $ and $K_{s,3}=\beta_3 \prod_{i\not=1} (1+k(x_i,x_i'))$.

As another example for our running example, we can take $K_s$ to be the sum of the portion of the kernel that does not depend on $x_1$ and $x_2$ and the remaining portion, i.e., $m=2$,
$K_{s,1}=1+\beta_1 \sum_{i\not=1,2} x_i x_i' + \beta_2 \sum_{i\leq j, i,j\not=1,2} x_i x_j x_i' x_j' + \beta_3 \prod_{i\not=1,2} (1+k(x_i,x_i'))$ and
$K_{s,2}=K_{s}-K_{s,1}$.

Then, we can order these sub-modes from most active to least active and create a new kernel $K_s$ by removing the least active modes from the signal and adding them to the mode that is set to be zero (see Fig.~\ref{figckmditeration}). To describe this, let $\pi(1),\cdots,\pi(m)$ be an ordering of the modes by their activation, i.e., $\|f_{s,\pi(1)}\|_{K_{s,\pi(1)}}^2 \geq \|f_{s,\pi(2)}\|_{K_{s,\pi(2)}}^2 \geq \cdots $.

Writing $K_t=\sum_{i=r+1}^m K_{s,\pi(i)}$ for the additive kernel obtained from the least active modes (with $r+1=m$ as the value used for our numerical implementations), we update the kernels $K_s$ and $K_z$ by assigning the least active modes from $K_s$ to $K_z$, i.e.,  $K_s-K_t \to K_s$ and $K_z+K_t \to K_z$ (we zero the least active modes).

\begin{figure}[h]
    \centering
        \includegraphics[width=0.7\textwidth ]{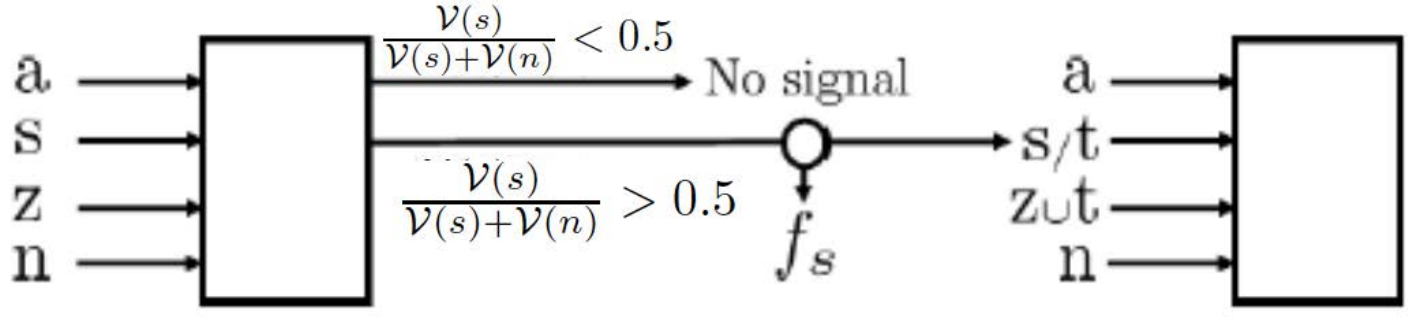}
    \caption{Iterating by removing the least active modes from the signal}
    \label{figckmditeration}
\end{figure}

Finally, we can iterate the process. This iteration can be thought of as identifying the structure of the hypergraph by placing too many hyperedges and removing them according to the activation of the underlying GPs. 

For our running example \ref{exreg1}, where we try to identify the ancestors of the variable $x_1$, if the sub-mode associated with the variable $x_2$ is found to be least active, then we can try to remove $x_2$ from the list of ancestors and try to identify $x_1$ as a function of $x_3$ to $x_d$. This is equivalent to selecting 
$K_a(x,x')=\beta_1 x_1 x_1'$,
\begin{equation}\label{eqjehbjedhbedudh}
K_{s/t}=1+\beta_1 \sum_{i\not=1,2} x_i x_i' + \beta_2 \sum_{i\leq j, i,j\not=1,2} x_i x_j x_i' x_j' + \beta_3 \prod_{i\not=1,2} (1+k(x_i,x_i'))\,,
\end{equation}
 and $K_{z\cup t}=K-K_a-K_{s/t}$ to assess whether there exists a function $f_s\in \H_K$ that does not depend on $x_1$ and $x_2$ s.t. $x_1\approx f_s(x_3,\ldots,x_d)$ for $x\in \M$.

\subsubsection{Alternative determination of the list of ancestors.}\label{seciterleastimpaltsnrth}
Our initial approach to determining the list of ancestors of a given node is to use a fixed threshold (e.g., $\tau=0.5$) to prune nodes.
 We propose a refined approach that mimics the strategy employed in Principal Component Analysis (PCA) for deciding which modes should be kept and which ones should be removed. The PCA approach is to order the modes in decreasing order of eigenvalues/variance and (1) either keep the smallest number modes holding/explaining  a given fraction (e.g., $90\%$) of the variance in the data, (2) or use an inflection point/sharp drop in the decay of the eigenvalues to select which modes should be kept.
Here, we propose a similar strategy. First we employ an  alternative determination of the least active mode: we iteratively remove the mode that leads to the smallest increase in noise-to-signal ratio, i.e., we remove the mode $t$ such that,
\begin{equation}
t=\operatorname{argmin_t}\frac{\V(n)}{\V(s/t)+\V(n)}\,.
\end{equation}
For our running example \ref{exreg1} in which we try to find the ancestors of the variable $x_1$ this is equivalent to removing the variables or node $t$ whose removal leads to the smallest loss in signal-to-noise ratio (or increase in noise-to-signal ratio) by selecting 
$$K_{s/t}=1+\beta_1 \sum_{i\not=1,t} x_i x_i' + \beta_2 \sum_{i\leq j, i,j\not=1,t} x_i x_j x_i' x_j' + \beta_3 \prod_{i\not=1,t} (1+k(x_i,x_i'))\,.$$
Next, we iterate this process, and we plot (a) the noise-to-signal ratio, and (b) the increase in noise-to-signal ratio as a function of the number of ancestors ordered according to this iteration.  Fig.~\ref{figfpu3} illustrates this process and shows that the removal of an essential node leads to a sharp spike in increase in the noise-to-signal ratio (the noise-to-signal ratio jumps from approximately 50-60\% to 99\%).
The identification of this inflection point can be used as a method for effectively and reliably pruning ancestors.

\begin{figure}
    \centering
    \includegraphics[width=\linewidth]{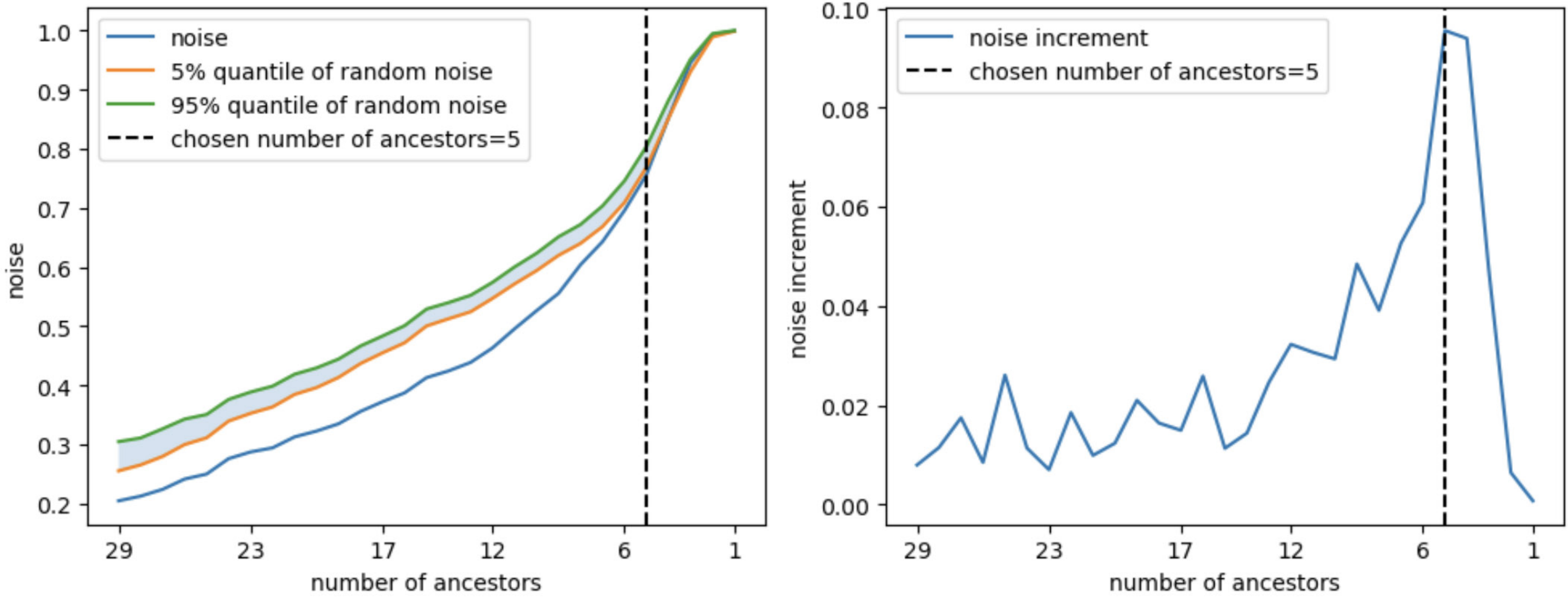}
    \caption{Computing the ancestors of the variable $\dot{x}_0$ in the Fermi-Pasta-Ulam-Tsingou problem. 
    (a) Noise-to-Signal Ratio, denoted as $\frac{\V(n)}{\V(s)+\V(n)}(q)$, with respect to the number of proposed ancestors, represented by $q$. Additionally, we include a visualization of the quantiles derived from the $Z$-test, as described in Section~\ref{secseltrzscore}. Notably, when there is no signal present, the noise-to-signal ratio is expected to fall within the shaded area with a probability of 0.9.
(b) Increments in the Noise-to-Signal Ratio, defined as $\frac{\V(n)}{\V(s)+\V(n)}(q)-\frac{\V(n)}{\V(s)+\V(n)}(q-1)$, as a function of the number of ancestors, denoted as $q$. The horizontal axis represents the number of proposed ancestors for $\dot{x}_0$. Determining an appropriate stopping point based solely on absolute noise-to-signal ratio levels can be challenging. In contrast, the increments in the noise-to-signal ratio clearly exhibit a discernible maximum, offering a practical point for decision-making.
    }
    \label{figfpu3}
\end{figure}

\begin{algorithm}
\caption{CHD by thresholding the signal-to-noise ratio }\label{alg1}
\begin{algorithmic}[1]
\Require Data $D$, set of nodes $V$, threshold $\tau$ ($\tau=0.5$ as a default value)
\Ensure Learned hypergraph \Comment{Set of ancestors for each node} 
\State  $D$ $\gets$ NormalizeData($D$) \Comment{Normalize the data} \label{alg1normalizedata}
\For{ $v\in V$}\label{alg1node}
\For{kernel $\in$ [\say{linear}, \say{quadratic}, \say{nonlinear}]} \Comment{Find the kernel}\label{alg1kernel}
\State SetOfAncestors($v$) $\gets$ All other nodes  
\State SignalToNoiseRatio $\gets$ ComputeSignalToNoiseRatio(kernel, node, $D$)  \label{alg1stnratio}
\If{SignalToNoiseRatio $>\tau$} choose that kernel and exit the for loop
\Else \, remove all ancestors from node
\EndIf
\EndFor
\While{SignalToNoiseRatio $>\tau$} \Comment{Prune ancestors}
\State Find least important ancestor \label{Linefindleastimanc1}
\State Recompute SignalToNoiseRatio without ancestor \label{Linefindleastimanc2}
\If{SignalToNoiseRatio $>\tau$} Remove that ancestor \label{Linefindleastimanc3}
\EndIf
\EndWhile
\EndFor
\end{algorithmic}
\end{algorithm}

\section{Algorithm pseudocode.}\label{Secpseudocode}

Our overall method is summarized in the pseudocode Alg.~\ref{alg1} and Alg.~\ref{alg2} that we will now describe. 
Alg.~\ref{alg1} takes the data $D$ (encoded into the samples $X_1,\ldots,X_N$ of Problem \ref{Pbkjedn}) 
and the set of nodes $V$ as an input and produces, as described in Sec.~\ref{seckernelgen}, 
for each node $i\in V$ its set of minimal ancestors $A_i$ and the simplest possible function $f_i$ such that $x_i\approx f_i\big((x_j)_{j\in A_i}\big)$. It employs the default threshold of $0.5$ on the signal-to-noise ratios for its operations.
Line \ref{alg1normalizedata} normalizes the data (via an affine transformation) so that the samples $X_i$ are of mean zero and variance $1$.
Given a node with index $i=1$ in Line \ref{alg1node} ($i$ runs through the set of nodes, and we select $i=1$ for ease of presentation), the command in Line \ref{alg1kernel} refers to selecting a signal kernel of the form $K_s=$\eqref{eqldwdwod1} (where $k$ is selected to be a vanilla RBF kernel such as Gaussian or Mat\'{e}rn), 
 with 
$1\geq \beta_1 >0=\beta_2=\beta_3$ for the linear kernel, $1\geq \beta_1 \geq \beta_2>0=\beta_3$ for the quadratic kernel and $1\geq \beta_1 \geq \beta_2\geq \beta_3>0$ for the fully nonlinear (interpolative) kernel.
 The ComputeSignalToNoiseRatio function in Line \ref{alg1stnratio} computes the signal-to-noise ratio with $g_a(x)=x_1$ and with the kernel selected in Line \ref{alg1kernel}.
The value of $\gamma$ is selected automatically by maximizing the variance of the histogram of eigenvalues of   $D_\gamma$  as described in Sec.~\ref{secselgamma} 
 (with the kernel $K=K_s=$\eqref{eqldwdwod1} selected in Line \ref{alg1kernel} and $Y=g_a(X)$ with $g_a(x)=x_1$).
The value of $\gamma$ is re-computed whenever a node is removed from the list of ancestors, and $K_s$ is nonlinear.
 Lines \ref{Linefindleastimanc1}, \ref{Linefindleastimanc2} and \ref{Linefindleastimanc3} are described in Sec.~\ref{seciterleastimp}. They correspond to iteratively identifying the ancestor node $t$ contributing the least to the signal and removing that node from the set of ancestors of the node $1$ if the removal of that node $t$ does not send the signal-to-noise ratio below the default threshold $0.5$.
 
\begin{algorithm}[!ht]
\caption{CHD by inflection point in the noise-to-signal ratio }\label{alg2}
\begin{algorithmic}[1]
\Require Data $D$, set of nodes $V$, threshold $\tau$ ($\tau=0.5$ as a default value)
\Ensure Learned hypergraph \Comment{Set of ancestors for each node} 
\State  $D$ $\gets$ NormalizeData($D$) \Comment{Normalize the data} 
\For{node $v\in V$}\label{alg1node_inflection}
\For{kernel $\in$ [\say{linear}, \say{quadratic}, \say{nonlinear}]} \Comment{Find the kernel}
\State SetOfAncestors $\gets$ All other nodes  
\State SignalToNoiseRatio $\gets$ ComputeSignalToNoiseRatio(kernel, node, $D$)  
\If{SignalToNoiseRatio $>\tau$} choose that kernel and exit the for loop
\Else \, remove all ancestors from node
\EndIf
\EndFor
\State q $\gets$ Cardinal(All other nodes) 
\State SetOfAncestors$(q)$ $\gets$ All other nodes  
\While{q $\geq 1$} 
\State  NoiseToSignalRatio($q$)  $\gets$ ComputeNoiseToSignalRatio(kernel, node, $D$)  
\State LeastImportantAncestor $\gets$ Find least important ancestor in SetOfAncestors($q$)
\State SetOfAncestors$(q-1)$  $\gets$ SetOfAncestors($q$) $\setminus$ LeastImportantAncestor
\State $q$ $\gets$ $q-1$
\EndWhile
\State $q^\dagger$ $\gets$ Inflection point in ($q\rightarrow$ NoiseToSignalRatio$(q)$) or  spike in ($q\rightarrow$ NoiseToSignalRatio$(q)$ - NoiseToSignalRatio$(q-1)$)
\State FinalSetOfAncestors($v$) $\gets$ SetOfAncestors$(q^\dagger)$ 
\EndFor
\end{algorithmic}
\end{algorithm}

Algorithm \ref{alg2} distinguishes itself from Algorithm \ref{alg1} in its approach to pruning ancestors based on signal-to-noise ratios. Instead of using a default threshold of $0.5$ like Algorithm \ref{alg1}, Algorithm \ref{alg2} computes the noise-to-signal ratio, represented as $\frac{\V(n)}{\V(s)+\V(n)}(q)$. This ratio is calculated as a function of the number $q$ of ancestors, which are ordered based on their decreasing contribution to the signal. The detailed methodology behind this computation can be found in Section \ref{seciterleastimpaltsnrth} and is visually depicted in Figure \ref{figfpu3}. The final number $q$ of ancestors is then determined by finding the value that maximizes the difference between successive noise-to-signal ratios, $\frac{\V(n)}{\V(s)+\V(n)}(q+1)-\frac{\V(n)}{\V(s)+\V(n)}(q)$.

\section{Analysis of the signal-to-noise ratio test.}\label{secsnrt}
\subsection{The signal-to-noise ratio depends on the prior on the level of noise.}
The signal-to-noise ratio \eqref{eqjhguyg65} depends on the value of $\gamma$, which is the variance prior on the level of noise. The goal of this subsection is to answer the following two questions: (1)	How do we select $\gamma$? (2) How do we obtain a confidence level for the presence of a signal? Or equivalently for a hyperedge of the hypergraph?
To answer these questions, we will now analyze the signal-to-noise ratio in the following regression problem in which we seek to approximate the unknown function $f^\dagger\,:\,  \X \to \R$ based on noisy observations 
\begin{equation}\label{eqlkjdendhd2A}
f^\dagger(X)+\sigma Z=Y
\end{equation}
 of its values at collocation points $X_i$ ($(X,Y)\in \mathcal{X}^N\times \R^N$, $Z\in \R^N$,  and the entries $Z_i$ of $Z$ are i.i.d  $\N(0,1 )$).
Assuming $\sigma^2$ to be unknown and writing $\gamma$ for a candidate for its value,
recall that the GP solution to this problem is  approximate $f^\dagger$ by interpolating the data with the sum of two independent GPs, i.e., 
\begin{equation}
f(x)=\mathbb{E}[\xi(x)|\xi(X)+\sqrt{\gamma} Z=Y]\,,
\end{equation}
where $\xi \sim \cN(0,K)$ is the GP prior for the signal $f^\dagger$ and $\sqrt{\gamma} Z \sim \mathcal{N}(0,\gamma I_N)$ is the GP prior for  the noise $\sigma Z$ in the measurements.
Following Sec.~\ref{seckmdq} $f$ can also be identified as a minimizer of
\begin{equation}
\text{minimize}_{f'}\|f'\|_{K}^2+\frac{1}{\gamma}\|f'(X)-Y\|^2_{\R^N}\,,
\end{equation}
the activation of the signal GP can be quantified as $s=\|f\|_{K}^2$, the activation of the noise GP can be quantified as 
$\V(n)=\frac{1}{\gamma}\|f(X)-Y\|_{\R^N}^2$.
We can then define the  noise to signal ratio $\frac{\V(n)}{\V(s)+\V(n)}$, which  admits the following representer formula,
\begin{equation}\label{eqkkjedjedkd}
\frac{\V(n)}{\V(s)+\V(n)}=\gamma \frac{Y^T \big(K(X,X)+\gamma I\big)^{-2} Y}{Y^T \big(K(X,X)+\gamma I\big)^{-1} Y}\,.
\end{equation}
Observe that when applied to the setting of Sec.~\ref{secstnratioint}, this signal-to-noise ratio is calculated with $K=K_s$ and $Y=g_a(X)$. 

Now we have the following proposition, which follows from \eqref{eqkkjedjedkd}.
\begin{Proposition}
It holds true that $\frac{\V(n)}{\V(s)+\V(n)} \in [0,1]$, and if $K(X,X)$ has full rank, 
\begin{equation}
\lim_{\gamma \downarrow 0} \frac{\V(n)}{\V(s)+\V(n)}=0 \text{ and } \lim_{\gamma \uparrow \infty} \frac{\V(n)}{\V(s)+\V(n)}=1\,.
\end{equation}
\end{Proposition}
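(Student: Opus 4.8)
The plan is to reduce everything to the representer formula \eqref{eqkkjedjedkd} and then exploit the spectral decomposition of the Gram matrix $K(X,X)$. Since $K(X,X)$ is a symmetric positive semidefinite $N\times N$ matrix, I would write $K(X,X)=\sum_{i=1}^N \mu_i u_i u_i^T$ with eigenvalues $\mu_i\geq 0$ and an orthonormal eigenbasis $(u_i)_i$, and expand the data vector as $Y=\sum_i c_i u_i$ with $c_i=u_i^T Y$. Substituting into \eqref{eqkkjedjedkd} and using that $(K(X,X)+\gamma I)^{-1}$ and $(K(X,X)+\gamma I)^{-2}$ are diagonal in this basis with entries $(\mu_i+\gamma)^{-1}$ and $(\mu_i+\gamma)^{-2}$, the ratio becomes
\begin{equation}
\frac{\V(n)}{\V(s)+\V(n)}=\frac{\sum_i \frac{\gamma\, c_i^2}{(\mu_i+\gamma)^2}}{\sum_i \frac{c_i^2}{\mu_i+\gamma}}\,.
\end{equation}

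The key observation I would make is that this expression is a convex combination of the scalars $\frac{\gamma}{\mu_i+\gamma}$. Indeed, setting $w_i:=\frac{c_i^2/(\mu_i+\gamma)}{\sum_j c_j^2/(\mu_j+\gamma)}$ gives nonnegative weights summing to one (well-defined whenever $Y\neq 0$, which holds for nonzero data), and the ratio equals $\sum_i w_i \frac{\gamma}{\mu_i+\gamma}$. Since $\mu_i\geq 0$ and $\gamma>0$ force $\frac{\gamma}{\mu_i+\gamma}\in[0,1]$ for every $i$, the membership $\frac{\V(n)}{\V(s)+\V(n)}\in[0,1]$ is immediate.

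For the two limits I would use the same convex-combination representation together with the full-rank hypothesis. Full rank means $\mu_i>0$ for all $i$, so as $\gamma\downarrow 0$ each factor $\frac{\gamma}{\mu_i+\gamma}\to 0$; bounding the weighted average above by $\max_i \frac{\gamma}{\mu_i+\gamma}=\frac{\gamma}{\mu_{\min}+\gamma}\to 0$ (with $\mu_{\min}=\min_i\mu_i>0$) yields the first limit. For $\gamma\uparrow\infty$ each factor $\frac{\gamma}{\mu_i+\gamma}\to 1$, and bounding the weighted average below by $\min_i\frac{\gamma}{\mu_i+\gamma}=\frac{\gamma}{\mu_{\max}+\gamma}\to 1$ together with the upper bound of $1$ already established gives the second limit by squeezing. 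Because the sum is finite and the weights stay in $[0,1]$, no uniformity issues arise.

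This argument is essentially a routine spectral computation, so I do not anticipate a genuine obstacle; the only points requiring a little care are recording the harmless nondegeneracy assumption $Y\neq 0$ so the denominator stays positive, and noting that the full-rank hypothesis is precisely what prevents a zero eigenvalue from pinning $\frac{\gamma}{\mu_i+\gamma}$ at $1$ and thereby obstructing the $\gamma\downarrow 0$ limit.
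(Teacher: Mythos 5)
Your proof is correct and follows essentially the same route the paper intends: the paper states the proposition "follows from" the representer formula \eqref{eqkkjedjedkd}, and the spectral decomposition you carry out is exactly the machinery the paper itself develops in Sec.~\ref{secfullynonlinearkerngam}, where the ratio is rewritten as $\sum_i \omega_i^2 \bar{Y}_i^2 / \sum_i \omega_i \bar{Y}_i^2$ with $\omega_i = \gamma/(\gamma+\lambda_i) \in [0,1]$, i.e., your convex-combination representation. Your explicit squeezing argument for the two limits and the recorded nondegeneracy assumption $Y \neq 0$ are fine details that the paper leaves implicit.
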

Therefore, we are led to the following question: if the signal $f^\dagger$ and the level of noise $\sigma^2$ are both unknown, how do we select $\gamma$ to decide whether the data is mostly signal or noise?

\subsection{How do we select the prior on the level of noise?}\label{secselgamma}
Our answer to this question depends on whether the feature-map associated with the base kernel $K$ is finite-dimensional or not.

\subsubsection{When the kernel is linear, quadratic or associated with a finite-dimensional feature map.}\label{secgamafindimpsi}
If the feature-map associated with the base kernel $K$ is finite-dimensional, then $\gamma$ can be estimated from the data itself when the number of data-points is sufficiently large (at least larger than the dimension of the feature-space $\S$).
A prototypical example (when trying to identify the ancestors of the variable $x_1$) is $K=K_s$=\eqref{eqldwdwod1} with $\beta_3=0$.
In the general setting assume that $K(x,x'):=\psi(x)^T  \psi(x')$ where the range $\S$ of $\psi$ is finite-dimensional.
Assume that $f^\dagger$ belongs to the RKHS defined by $\psi$, i.e., assume that it is of the form $f^\dagger= v^T \psi$ for some $v$ in the feature-space.
Then \eqref{eqlkjdendhd2A} reduces to
\begin{equation}\label{eqlkjdendhd2AB}
v^T \psi(X)+\sigma Z=Y\,,
\end{equation}
and, in the large data regime, $\sigma^2$ can be estimated by
\begin{equation}\label{equeddudydddy}
\bar{\sigma}^2:= \frac{1}{N}\inf_{w \in \S} \big\|w^T \psi(X)-Y\big\|_{\R^N}^2\,.
\end{equation}
Our strategy, when the feature map is finite-dimensional, is then to select 
\begin{equation}\label{eq:choice_gamma}
\gamma= N \bar{\sigma}^2= \inf_{w \in \S} \big\|w^T \psi(X)-Y\big\|_{\R^N}^2\,.
\end{equation}

\subsubsection{When the kernel is interpolatory (associated with an infinite-dimensional feature map).}\label{secfullynonlinearkerngam}
If the feature-map associated with the base kernel $K$ is infinite-dimensional (or has more dimensions than we have data points) then it can interpolate the data exactly and the previous strategy cannot be employed since the minimum of \eqref{equeddudydddy} is zero. A prototypical example (when trying to identify the ancestors of the variable $x_1$) is $K=K_s$=\eqref{eqldwdwod1} with $\beta_3>0$.
In this situation, we do not attempt to estimate the level of noise $\sigma$ but select a prior $\gamma$ such that the resulting noise-to-signal ratio can effectively differentiate noise from signal. To describe this, observe that the noise-to-signal ratio \eqref{eqkkjedjedkd} admits the  representer formula 
\begin{equation}\label{eqhejbjebdd}
\frac{\V(n)}{\V(s)+\V(n)}= \frac{Y^T D_\gamma^2 Y}{Y^T D_\gamma Y}\,,
\end{equation}
involving the $N\times N$ matrix
\begin{equation}\label{eqkwkbejkejdbdu}
 D_\gamma:=\gamma  \big(K(X,X)+\gamma I\big)^{-1}\,.
\end{equation}
Observe that $0\leq D_\gamma \leq I$,  and
\begin{equation}
\lim_{\gamma \downarrow 0} D_\gamma=0 \text{ and } \lim_{\gamma \uparrow \infty} D_\gamma=I\,.
\end{equation}

Write  $(\lambda_i,e_i)$ for the eigenpairs of $K(X,X)$ ($K(X,X) e_i=\lambda_i e_i$) where the $\lambda_i$ are ordered in decreasing order. 
Then the eigenpairs of $D_\gamma$ are $(\omega_i,e_i)$ where
\begin{equation}
\omega_i:=\frac{\gamma}{\gamma+\lambda_i}\,.
\end{equation}
Note that the  $\omega_i$ are contained in $[0,1]$ and also ordered in decreasing order. 

Writing $\bar{Y}_i$ for the orthogonal projection of  $Y$ onto $e_i$, we have
\begin{equation}\label{eq:explicit_signal_to_noise_ratio}
\frac{\V(n)}{\V(s)+\V(n)}= \frac{\sum_{i=1}^n \omega_i^2 \bar{Y}_i^2 }{\sum_{i=1}^n \omega_i \bar{Y}_i^2}\,,
\end{equation}
It follows that if the histogram of the eigenvalues of $D_\gamma$ is concentrated near $0$ or near $1$, then the noise-to-signal ratio is non-informative since the prior $\gamma$ dominates it. To avoid this phenomenon, we select $\gamma$ so that the eigenvalues of $D_\gamma$ are well spread out in the sense that the histogram of its eigenvalues has maximum or near-maximum variance (see Fig.~\ref{fighistogram0} for a good choice and a bad choice for $\gamma$). If the eigenvalues have an algebraic decay, then this is equivalent to taking $\gamma$ to be the geometric mean of those eigenvalues.\\
In practice, we use an off-the-shelf optimizer to obtain $\gamma$ by maximizing the sample variance of $(\omega_i)_{i=1}^n$. If this optimization fails, we default to the median of the eigenvalues. This ensures a balanced, well-spread spectrum for $D\gamma$, with half of the eigenvalues $\lambda_i$ being lower and half being higher than the median.

\subsubsection{Rationale for the choices of $\gamma$} 

The purpose of this section is to present a rationale for the proposed choices for $\gamma$ in Sec.~\ref{secgamafindimpsi} and \ref{secfullynonlinearkerngam}. 
For the choice Sec.~\ref{secgamafindimpsi}, we present an asymptotic analysis of the signal-to-noise ratio in the setting of a simple linear regression problem.  According to \eqref{eq:choice_gamma}, $\gamma$ must scale linearly in $N$; this scaling is necessary to achieve a ratio that represents the signal-to-noise per sample. Without it (if $\gamma$ remains bounded as a function of $N$), this scaling of the signal-to-noise would converge towards $0$ as $N\rightarrow \infty$. To see how we will now consider a  simple example in which we seek to linearly regress the variable $y$ as a function of the variable $x$, both taken to be scalar (in which case $\psi(x)=x$). Assume that the samples are of the form $Y_i=a X_i+\sigma Z_i$ for $i=1,\ldots,N$, where  $a, \sigma\not=0$, the $Z_i$ are i.i.d. $\cN(0,1)$ random variables, and the $X_i$ satisfy $\frac{1}{N}\sum_{i=1}^N X_i=0$ and $\frac{1}{N}\sum_{i=1}^N X_i^2=1$. Then, the signal-to-noise ratio is $\frac{\V(s)}{\V(s)+\V(n)}$ with $\V(s)=|v|^2$ and $\V(n)=\frac{1}{\gamma}\sum_{i=1}^N |v X_i-Y_i|^2$ and $v$ is a minimizer of 
\begin{equation}
\min_{v\in \R}|v|^2+\frac{1}{\gamma}\sum_{i=1}^N |v X_i-Y_i|^2\,.
\end{equation}
In asymptotic $N\rightarrow \infty$ regime, we have $v\approx \frac{aN}{\gamma +N}$ and 
\begin{equation}\label{eqkdeidhdiepid}
\frac{\V(s)}{\V(s)+\V(n)}\approx \frac{\frac{\gamma}{N} a^2}{- a^2 (\gamma/N+1)+(a^2+\sigma^2) (\gamma/N+1)^2}\,.
\end{equation}
If $\gamma$ is bounded independently from $N$, then $\frac{\V(s)}{\V(s)+\V(n)}$ converges towards zero as $N\rightarrow \infty$, which is undesirable as it does not represent a signal-to-noise ratio per sample.  If $\gamma=N$, then 
$\frac{\V(s)}{\V(s)+\V(n)}\approx \frac{a^2}{4\sigma^2+2 a^2}$, which does not converge to $1$ as $a\rightarrow \infty$ and $\sigma \rightarrow 0$, which is also undesirable.
If $\gamma$ is taken as in \eqref{eq:choice_gamma}, then $\gamma \approx N \sigma^2$ and 
\begin{equation}
\frac{\V(s)}{\V(s)+\V(n)}\approx \frac{a^2}{ (\sigma^2+1)(a^2+\sigma^2+1)}\,,
\end{equation}
which converges towards $0$ as $\sigma \rightarrow \infty$ and towards $1/(1+\sigma^2)$ as $a\rightarrow \infty$, which has, therefore, the desired properties.

Moving to Sec.~\ref{secfullynonlinearkerngam},
because the kernel can interpolate the data exactly we can no longer use \eqref{equeddudydddy} to estimate the level of noise $\sigma$.
For a finite-dimensional feature map $\psi$, with data $(X,Y)$, we can decompose $Y=v^T \psi(X)+\sigma Z$ into a signal part $Y_s$ and noise part $Y_s$, s.t. $Y=Y_s+Y_n$.
While $Y_s$ belongs to the linear span of  eigenvectors of $K(X,X)$ associated with non-zero eigenvalues, $Y_n$ also activates the eigenvectors associated with with the null space of $K(X,X)$ and the projection of $Y$ onto that null-space is what allows us to derive $\gamma$ in 
Sec.~\ref{secgamafindimpsi}. 
Since in the interpolatory case, all eigenvalues are strictly positive, we need to choose which eigenvalues are associated with noise differently, as is described in the previous section. With a fixed $\gamma$, we see that if $\lambda_i \gg\gamma$, then $\omega_i\approx 0 $, which contributes in (\ref{eq:explicit_signal_to_noise_ratio}) to yield a low noise-to-signal ratio. Similarly, if $\lambda_i \ll\gamma$, this eigenvalue yields a high noise-to-signal ratio. Thus, we see that the choice of $\gamma$ assigns a noise level to each eigenvalue. While in the finite-dimensional feature map setting, this assignment is binary, here we perform soft thresholding using $\lambda\mapsto \gamma/(\gamma+\lambda)$ to indicate the level of noise of each eigenvalue. 
This interpretation sheds light on the selection of $\gamma$ in equation \eqref{eq:choice_gamma}. Let $\psi$ represent the feature map associated with $K$. Assuming the empirical mean of $\psi(X_i)$ is zero,
the matrix $K(X,X)$ corresponds to an unnormalized kernel covariance matrix $\psi^T(X)\psi(X)$. Consequently, its eigenvalues correspond to $N$ times the variances of the $\psi(X_i)$ across various eigenspaces.
After conducting Ordinary Least Squares regression in the feature space, if the noise variance is estimated as
$\bar{\sigma}^2$, then any eigenspace of the normalized covariance matrix whose eigenvalue is lower than $\bar{\sigma}^2$ cannot be recovered due to the noise. Given this, we set the soft thresholding cutoff to be $\gamma=N\bar{\sigma}^2$ for the unnormalized covariance matrix $K(X,X)$.

\subsection{Z-score/quantile bounds on the noise-to-signal ratio.}\label{secseltrzscore}
If the data is only comprised of noise, then an interval of confidence can be obtained on the noise-to-signal ratio.
To describe this
consider the problem of testing the null hypothesis   ${\bf H_0} : f^\dagger \equiv 0$ (there is no signal) against the alternative hypothesis ${\bf H_1}: f^\dagger \not\equiv 0$ (there is a signal). Under the  null hypothesis   ${\bf H_0}$, the distribution of the noise-to-signal ratio \eqref{eqhejbjebdd} is known and it follows that of the random variable
\begin{equation}
B:= \frac{Z^T D_\gamma^2 Z}{Z^T D_\gamma Z}\,.
\end{equation}
Therefore, the quantiles of $B$ can be used as an interval of confidence on the noise-to-signal ratio if ${\bf H_0} $ is true. More precisely, selecting $\beta$ such that 
$\P[B \leq \beta_\alpha]\approx \alpha$ with $\alpha=0.05$ as a prototypical example, we expect the noise to signal ratio \eqref{eqhejbjebdd} to be, under ${\bf H_0} $, to be larger than $\beta_\alpha$ with probability $\approx 1-\alpha$. The estimation of $\beta$ requires Monte-Carlo sampling.

An alternative approach (in the large data regime) to using the quantile $\beta_\alpha$ is to use the Z-score
\begin{equation}
\Z:=\frac{ \frac{Y^T D_\gamma^2 Y}{Y^T D_\gamma Y}-\E[B]}{\sqrt{\Var[B]}}\,,
\end{equation}
 after estimating $\E[B]$ and $\Var[B]$ via Monte-Carlo sampling.
In particular if ${\bf H_0}$ is  true then $|\Z|\geq z_\alpha$ should occur with probability $\approx \alpha$ with $z_{0.1}=1.65$, $z_{0.05}=1.96$ and $z_{0.01}=2.58$.

\begin{Remark}
Although the quantile $\beta_\alpha$ or the Z-score $\Z$ can be employed to produce an interval of confidence on the noise-to-signal ratio under ${\bf H_0}$ we cannot use them as thresholds for  removing nodes from the list of ancestors as discussed in Sec.~\ref{secstnratioint}
Indeed, observing a  noise-to-signal ratio \eqref{eqhejbjebdd} below the threshold $\beta_\alpha$ does not imply that all the signal has been captured by the kernel; it only implies that some signal has been captured by the kernel $K$. To illustrate this point, consider the setting where one tries to approximate the variable $x_1$ as a function of the variable $x_2$. If $x_1$ is not a function of $x_2$, but of $x_2$ and $x_3$, as in $x_1=\cos(x_2)+\sin(x_3)$, then applying the proposed approach with  $Y$ encoding the values of $x_1$, $X$ encoding the values of $x_2$, and the kernel $K$ depending on $x_2$ could lead to a noise-to-signal ratio below $\beta_\alpha$ due to the presence of a signal in $x_2$. Therefore, although we are missing the variable $x_3$ in the kernel $K$, we would still observe a possibly low noise-to-signal ratio due to the presence of \emph{some} signal in the data.
Summarizing if the data only contains noise then  $\frac{\V(n)}{\V(s)+\V(n)}\geq \beta_\alpha$ should occur with probability $1-\alpha$. 
If the event  $\frac{\V(n)}{\V(s)+\V(n)}< \beta_\alpha$ is observed in the setting of $K=K_{s/t}$=\eqref{eqjehbjedhbedudh} where we try to identify the ancestors of $x_1$, then 
 we can only deduce that $x_3,\ldots,x_d$ contain some signal but perhaps not all of it (we can use this a criterion for pruning $x_2$).
 \end{Remark}

\section{Supplementary information on examples.}\label{secsuplinfoexam}

\subsection{Algebraic equations.}
Although we have used Alg.~\ref{alg2} for the algebraic equations examples presented in Fig.~\ref{fignature4}, Alg.~\ref{alg1} yields the same results with the default signal-to-noise threshold $\tau=0.5$.

\subsection{The chemical reaction network.}
Consider the chemical reaction network example illustrated in  Fig.~\ref{fignature4}.(a).
The proposed mechanism for the hydrogenation of ethylene $(\text{C}_2 \text{H}_4)$ to ethane $(\text{C}_2 \text{H}_6)$, is (writing $[H]$ for the concentration of $H$)  modeled by the following system of differential equations
\begin{equation}\label{eqehyehdydA}
\begin{split}
\frac{d[H_2]}{dt}&=-k_1[H_2]+k_{-1}[H]^2\\
\frac{d[H]}{dt}&=2k_1[H_2]-2k_{-1}[H]^2-k_2 [C_2H_4][H]-k_3[C_2H_5][H]\\
\frac{d[C_2H_4]}{dt}&=-k_2 [C_2H_4][H]\\
\frac{d[C_2H_5]}{dt}&=k_2 [C_2H_4][H]-k_3[C_2H_5][H]
\end{split}
\end{equation}
The primary variables are the concentrations $[H_2]$, $[H]$, $[C_2H_4]$ and $[C_2H_5]$ and their time derivatives $\frac{d[H_2]}{dt}$,
$\frac{d[H]}{dt}$, $\frac{d[C_2H_4]}{dt}$ and $\frac{d[C_2H_5]}{dt}$. The computational hypergraph encodes the functional dependencies \eqref{eqehyehdydA} associated with the chemical reactions. The hyperedges of the hypergraph are assumed to be unknown and the primary variables are assumed to be known. Given $N$ samples from the graph of the form 
\begin{equation}\label{eqfnlsdasiflonkasA}
    \big([H_2](t_i),[H](t_i),[C_2H_4](t_i),[C_2H_5](t_i)\big)_{i=1,\ldots,N}
\end{equation}
our objective is to recover the structure of the hypergraph given by \eqref{eqehyehdydA}, representing the functions by hyperedges.
We create a dataset of the form \eqref{eqfnlsdasiflonkasA} by integrating 50 trajectories of \eqref{eqehyehdydA} for different initial conditions, and each equispaced 50 times from $t = 0$ to $t = 5$.
The dataset is represented in Fig.~\ref{fignature4}.(b)  (the time derivatives of concentrations are estimated by taking the derivatives of the interpolants of those concentrations).  We impose the information that the derivative variables are function of the non-derivative variables to avoid ambiguity in the recovery, as \eqref{eqehyehdydA} is not the unique representation of the functional relation between nodes in the graph.
We implement Alg.~\ref{alg1} with weights $\beta = [0.1,0.01,0.001]$ for linear, quadratic, and nonlinear, respectively (Alg.~\ref{alg2} recovers the same hypergraph). The output graph can be seen in Fig.~\ref{fignature4}.(b). We obtain a perfect recovery of the computational graph and  a correct identification of the relations being quadratic.

\subsection{The Google Covid 19  open data.}
Consider the example illustrated in Fig.~\ref{fignature3}.(e-k).
Categorical data are treated as scalar values, with all variables scaled to achieve a mean of 0 and a variance of 1. We implement three distinct kernel types: linear, quadratic, and Gaussian, with a length scale of 1 for the latter. A weight ratio of $1/10$ is assigned between kernels, signifying that the quadratic kernel is weighted ten times less than the linear kernel. Lastly, the noise parameter, $\gamma$, is determined using the optimal value outlined in Sec.~\ref{secsnrt}.
Initially, a complete graph is constructed using all variables, depicted in  Fig.~\ref{fignature3}.(g). This construction is done using only linear and quadratic kernels. 
The full graph is highly clustered and redundant information is eliminated by selecting representative nodes for each cluster.
 Eliminating redundant nodes is important for two reasons: firstly, it improves the graph's readability, especially with 31 variables; secondly, it avoids hindering graph discovery. In an extreme case, treating two identical variables as distinct would result in one variable’s ancestor simply being its duplicate, yielding an uninformative graph.
Subsequently, the graph discovery algorithm is rerun, with reduced variables due to eliminating redundancy, ushering us into a predominantly noisy regime. With fewer variables available, we use additionally the nonlinear  kernel. Two indicators are employed to navigate our discovery process: the signal-to-noise ratio and the Z-test. The former quantifies the degree to which our regression is influenced by noise, while the latter signals the existence of any signal. We follow the procedure in algorithm \ref{alg2}, resulting in the graph presented in Fig.~\ref{fignature3}.(k).

\subsection{Cell signaling network}\label{seccellsignaling}
Consider the example Fig.~\ref{fignature1}.(l) from \cite{sachs2005causal} and Fig.~\ref{fignature4}.(h-j). 
To identify the ancestors of each node, we apply the algorithm in two stages. First, we learn the dependencies using only linear and quadratic kernels. Fig.~\ref{fignature4}.(h) identifies the resulting graph learned given a subset of $N=2,000$ samples chosen uniformly at random from the dataset. We observe that the graph identified by the algorithm consists of four disconnected clusters where the molecule levels in each cluster are closely related by linear or quadratic dependencies (all connections are linear except for the connection between Akt and PKA, which is quadratic). These edges match a subset of the edges found in the gold standard model identified  in~\cite{sachs2005causal}. With perfect dependencies that have no noise, one can define constraints that reduce the total number of variables in the system. For this noisy dataset that, we treat these dependencies as forming groups of similar variables and introduce a hierarchical approach to learn the connections between groups.
Second, we run the graph discovery algorithm after grouping the molecules into clusters. For each node in the graph, we identified the ancestors of each node by constraining the dependence to be a subset of the clusters. In other words, when identifying the ancestors of a given node $i$ in cluster $C$, the algorithm is only permitted to (1) use ancestors that do not belong to cluster $C$, and (2) include all or none of the variables in each cluster ($j$ in cluster $D\not=C$ is listed as an ancestor if and only if all other nodes $j'$ in cluster $D$ are also listed as ancestors). The ancestors were identified using a Gaussian (fully nonlinear) kernel and the number of by ancestors were selected manually based on the inflection point in the noise-to-signal ratio. The resulting graph is depicted in Fig.~\ref{fignature4}.(i). Each edge is weighted based on its signal-to-noise ratio. We observe that there is a stronger dependence of the Jnk, PKC, and P38 cluster on the PIP3, Plcg, and PIP2 cluster, which closely matches the gold standard model. As compared to approaches based on acyclic DAGs, however, the graph identified by our algorithm also contains feedback loops between the various molecule levels. 
Fig.~\ref{fignature4}.(i-j) displays a side-by-side comparison between  the graph identified with our method and the graph generated in~\cite{sachs2005causal}. To aid in this comparison, we have highlighted different clusters in distinct colors.
We emphasize that while the Bayesian network analysis   in~\cite{sachs2005causal} relied on the control of the sampling of the underlying variables (the simultaneous
measurement of multiple phosphorylated protein and phospholipid components
in thousands of individual primary human immune system cells, and perturbing these
cells with molecular interventions), the reconstruction obtained by our method did not use this information and recovered functional dependencies rather than causal dependencies. 
Interestingly, the information recovered through our method appears to complement and enhance the findings presented in ~\cite{sachs2005causal} (e.g., the linear and noiseless dependencies between variables in the JNK cluster is not something that could easily be inferred from the graph produced in~\cite{sachs2005causal}).

\subsection{BCR reaction network}\label{supp_BCR}

In the high-dimensional example of the BCR reaction network, the computations of terms of the form $y^T k_o(X,X) y$ (i.e., the activations), where $y \in \mathbb{R}^n$ and $k_o(X,X)$ is the $o$-th coordinate of the quadratic kernel ($k(x_i, x_j) = (1 + \langle x_i, x_j \rangle)^2$) becomes the computational bottleneck of our method.  If we let $x_1, \ldots, x_n \in \mathbb{R}^p$ be the points and $x_i^o$ be the $o$-th coordinate of $x_i$, we can compute the activation of the $o-$th coordinate using 
\begin{equation}
k_o(x_i, x_j) = (1 + x^o_i x^o_j)^2 - 1 + 2 x^o_i x^o_j \langle x_i^{-o}, x_j^{-o} \rangle
\end{equation}
where $k_o$ is the $o-$th coordinate of the kernel and $x^{-o}_i$ represents the remaining coordinates of $x_i$. To compute the $n\times n$ kernel matrix of $k_o$ for each $o\in\{1,..,p\}$, we must compute $p\times n\times n$ inner products in $\mathbb{R}^p$, which is a very large computation. Instead, we may use the following reformulation to speed up computations. Notice $\langle x_i, x_j \rangle = x_i^o x_j^o + \langle x_i^{-o}, x_j^{-o} \rangle$, and therefore $k_o(x_i, x_j) = 2 x_i^o x_j^o \langle x_i, x_j \rangle + 2 x_i^o x_j^o - (x_i^o x_j^o)^2$.
Now, define $v^o = (x_i^o y_i)_{i = 1}^p$ and $w^o = ((x_i^o)^2 y_i)_{i =1}^p$, and note that 
\begin{equation}
    y^T K_o y = \sum_{i, j} 2 y_i x_i^o y_j x_j^o (1+ \langle x_i, x_j \rangle) - \sum_{i, j} y_i y_j (x_i^o x_j^o)^2
\end{equation}
and so defining $\tilde{K}= (2 (1+\langle x_i, x_j \rangle))_{i, j = 1}^{n}$ we have that 
\begin{equation}
    y^T K_o y = {v^o}^T \tilde{K} v^o - \left (\sum_{i = 1}^p w_i^o \right )^2 
\end{equation}
Note that $\tilde{K}$ is computed just once for all $p$, and only $v^o$ and $w^o$ change for every ancestor calculation, which is where the main computational gain comes from. 
One may find \href{https://github.com/TheoBourdais/ComputationalHypergraphDiscovery/blob/main/examples/faster_activation_computation.ipynb}{in the GitHub repository of the paper} a comparison of the two methods of computations and observe a tenfold speedup. This speedup is even larger in our implementation of the  BCR example, as GPU acceleration enables the second method to run even faster.

\end{document}